\documentclass[11pt]{article}

\usepackage[margin=1in]{geometry}
\usepackage{setspace}
\usepackage{amsmath,amssymb,amsthm}
\RequirePackage{tgtermes}
\RequirePackage{newtxtext}
\RequirePackage{newtxmath}
\usepackage{bm}
\usepackage{endnotes}

\usepackage{algorithm}
\usepackage{algpseudocode}
\usepackage{tikz}
\usepackage[utf8]{inputenc}
\usepackage[T1]{fontenc}
\usepackage{hyperref}
\usepackage{url}
\usepackage{booktabs}
\usepackage{amsfonts}
\usepackage{nicefrac}
\usepackage{microtype}
\usepackage{xcolor}
\usepackage{graphicx}
\usepackage{comment}
\usepackage{array}
\newcolumntype{L}{>{\raggedright\arraybackslash}p{1.5cm}}
\newcolumntype{C}{>{\centering\arraybackslash}p{1.5cm}}
\newcolumntype{R}{>{\raggedleft\arraybackslash}p{1.1cm}}

\usepackage{natbib}
\bibpunct[, ]{(}{)}{,}{a}{}{,}%
\newtheorem{theorem}{Theorem}

\newcommand{\TABLE}[3]{%
  \centering
  \caption{#1}
  #2\par
  \vspace{4pt}
  {\footnotesize #3}
}

\title{(Mis)Understanding Benign Overfitting \\ in Equity Return Prediction}

\author{%
  Hui Guo\thanks{Carl H. Lindner College of Business, University of Cincinnati, \texttt{guohu@ucmail.uc.edu}}
  \and
  Jiawei Huang\thanks{Cat\'olica Lisbon School of Business and Economics, Universidade Cat\'olica Portuguesa, \texttt{jhuang@ucp.pt}}
  \and
  Runze Li\thanks{Department of Statistics, Pennsylvania State University, \texttt{rzli@psu.edu}}
  \and
  Yan Yu\thanks{Carl H. Lindner College of Business, University of Cincinnati, \texttt{Yan.Yu@uc.edu}}
}

\date{}

\begin{document}

\maketitle

\begin{abstract}
Highly overparameterized models often predict well despite interpolating training data in complex domains, challenging the classical bias--variance tradeoff. We investigate whether this ``benign overfitting'' phenomenon extends to equity return prediction. Consistent with recent statistical theory, we document two key phenomena: first, a double descent pattern in the ridgeless model's prediction risk; and second, that while the optimal ridge model consistently outperforms its ridgeless counterpart, this performance gap becomes negligible at large parameter-to-observation ratios. Ultimately, however, both models fail to outperform a simple historical average. This empirical evidence aligns with our asymptotic results under the null hypothesis of zero slope coefficients, suggesting that standard equity predictors lack true forecasting power---even within highly flexible, nonlinear machine learning architectures. These findings reconcile modern and classical machine learning in asset pricing: in the absence of a true signal, they asymptotically collapse to the historical average benchmark.

\medskip
\noindent\textbf{Keywords:} Stock Return Predictability; Machine Learning; Ridge(less) Regression; Bias and Variance Tradeoff; Momentum
\end{abstract}

\section{Introduction}\label{sec:Intro}

Modern machine learning models often achieve strong out-of-sample predictive performance despite operating in highly overparameterized regimes. This phenomenon, known as benign overfitting, occurs when models generalize well even as they nearly interpolate the training data \citep{belkin2019reconciling}. Benign overfitting appears to contradict the classical bias--variance tradeoff, which anticipates a U-shaped risk curve as model flexibility increases \citep{Hastie2009}. However, recent theoretical advancements \citep{bartlett2020benign, hastie2022surprises, mei2022generalization, tsigler2023benign} reveal that under appropriate spectral conditions and signal-to-noise alignments, interpolating estimators avoid catastrophic out-of-sample failure. Specifically, their prediction risk actually declines as the degree of overparameterization increases, providing a theoretical explanation for the double-descent risk curve documented by \citet{belkin2019reconciling}. 

We investigate benign overfitting within the context of market return prediction, a central problem in asset pricing.
Although attempts to forecast market returns date back more than a century \citep{Dow1920}, robust empirical evidence of return predictability remains elusive. Comprehensive studies consistently find that most standard predictors exhibit little out-of-sample forecasting ability \citep{Welch2008Comprehensive, Goyal2024}. However, modern machine learning methods have recently revitalized this literature, suggesting that economically meaningful signals may be recoverable through highly flexible nonlinear models, even when traditional linear approaches fail. 

Specifically, \citeauthor{Kelly2024} (\citeyear{Kelly2024}; hereafter KMZ) demonstrate that random Fourier features (RFFs)---generated from the predictor variables of \citet{Welch2008Comprehensive} via the approach of \citet{Rahimi2007}---deliver significant market-timing ability in ridge regressions. Their central conclusion is a ``virtue of complexity,'' whereby out-of-sample performance increases monotonically with the parameter-to-observation ratio, $\gamma=P/T$, as both $P$ and $T$ diverge. Importantly, however, equating complexity with the raw number of predictors 
$P$ can be misleading in regularized models. For ridge regression, the effective degrees of freedom (EDF) provide a more relevant measure of complexity, because they incorporate both model dimension and shrinkage; as a result, even a large $P$-feature specification may be effectively close to an intercept-only model under strong regularization.

Overall, this setting provides an ideal testbed for examining benign overfitting because it combines many of the challenges that characterize traditional equity-premium forecasting: weak signals, persistent and highly correlated predictors, potential nonlinearities, and severe overparameterization. Consequently, determining the precise sources of these documented gains---whether they arise from genuine nonlinear predictability or from the mechanical shrinkage properties of overparameterized models---has important implications for both machine learning theory and empirical asset pricing.

Figure~\ref{fig:double_descent} illustrates two key findings regarding benign overfitting in equity return prediction.\footnote{In Figure~\ref{fig:double_descent}, we set $P$ ranging from 2 to 12,000 for each $T$. The entire procedure is repeated 100 times with independent generations of RFF features. The out-of-sample test risk is calculated upon the average return across these 100 independent generations. For visual clarity, the figure displays the curve only up to $\gamma=5,$ beyond this range, the empirical curves continue to decline and plateau.} First, the ridgeless RFF estimator exhibits the familiar double-descent phenomenon, where test risk spikes near the interpolation threshold ($\gamma = P/T = 1$) before declining as $\gamma$ increases. Second, a heavily regularized ridge regression ($z = 10^{9}$) achieves a strictly lower risk level across all $\gamma$, empirically tracking the historical-average forecast. This empirical dominance is theoretically grounded \citep{hastie2022surprises}: under the null hypothesis of zero true slope coefficients, the prediction risk of the overparameterized ridgeless model decreases monotonically with $\gamma$ toward the irreducible noise floor of the true model. In addition, our framework dictates that the optimal ridge penalty diverges ($z^{\ast} = \infty$). Consequently, the optimally tuned model bypasses the double-descent curve entirely, collapsing directly to the historical average.

\begin{figure}[h]
\centering 
\includegraphics[width=0.8\textwidth]{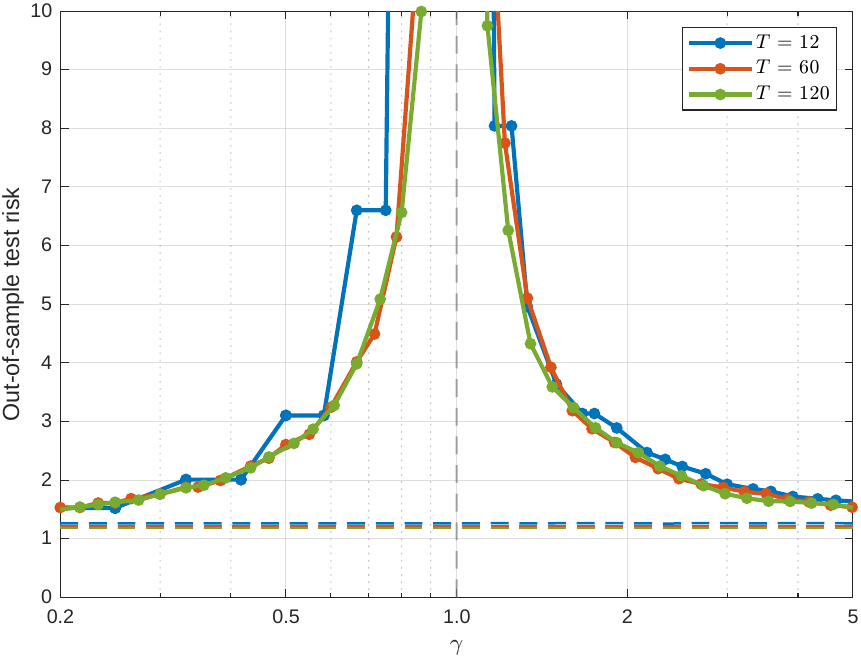}
\caption{Out-of-sample test risk (mean squared error) against $\gamma = P/T$. Each color corresponds to a different training window length ($T = 12$, $60$, and $120$ months). Solid lines show the ridgeless estimator ($z \to 0^+$); dashed lines show the heavily regularized benchmark ($z = 10^9$), which virtually overlay their historical averages respectively. The vertical dashed line at $\gamma = 1$ marks the interpolation threshold.}
\label{fig:double_descent}
\end{figure}

These findings reveal a  crucial limitation of complex machine learning models in asset pricing: benign overfitting does not guarantee an enhanced capacity to forecast equity market returns. This reality is easily obscured because the declining risk in the highly overparameterized regime fosters the impression that larger models are inherently better at extracting true signals \citep{Kelly2024}. However, the fact that a simple historical average outperforms the complex ridgeless estimator explicitly refutes this virtue of complexity interpretation. Ultimately, no model can identify a signal that does not exist in the data; when the underlying features lack predictive power, the optimally regularized machine learning architecture simply collapses back to the historical average benchmark.

To formally characterize benign overfitting in market return prediction, this paper offers several distinct methodological and empirical contributions. Theoretically, we prove that under the null hypothesis of zero slope coefficients, the optimal ridge penalty diverges ($z \to \infty$), mathematically collapsing the high-dimensional RFF model to an intercept-only specification. This formally establishes the standard historical average as the theoretically justified benchmark for evaluating overparameterized prediction models. Furthermore, 
we establish that a cross-validation (CV) tuned ridge shrinkage parameter achieves the same asymptotic prediction risk as the oracle-optimal penalty -- a general result that holds regardless of whether the null is true. Empirically, we demonstrate that
this convergence to an intercept-only model holds across a wide range of z values. Consequently, when an intercept is included, RFF forecasts mechanically track the historical average across various penalties and training windows, explaining the model’s apparent pre-1975 success and subsequent post-1975 collapse. Finally, a block bootstrap analysis corroborates these findings, confirming that models trained on actual economic predictors do not significantly outperform those trained on randomly resampled covariates stripped of predictive content.

Our results also show that the market-timing ability of KMZ's no-intercept RFF model is attributable to its correlation with this historical average, driven by their use of non-centered features. When predictors are appropriately mean-centered, the RFF model's forecasting power vanishes, consistent with the null of zero slopes. Ultimately, we demonstrate that this reliance on the historical average benchmark extends beyond the RFF setting, explaining the performance of various momentum strategies examined by \citet{Nagel2025} and \citet{Kelly2025}.

The remainder of the paper is organized as follows. Section \ref{ModTheory} discusses model and theoretical results for the RFF model with optimal CV ridge regression and under null of zero slope. Section \ref{Model} illustrates data structure and model setting. Section \ref{Empirics} presents the main empirical findings. Section \ref{MomStra} compares RFF model forecasts with momentum-based strategies. Section \ref{Boot} presents the bootstrap simulation results. Section \ref{windows} extends to different training windows. Section \ref{rem:KMZnointercept} Reconciles with KMZ's main results. Section \ref{Conclusion} concludes the paper. Additional proofs and empirical results are provided in the Supplementary Materials. The computation code is available at : \url{https://github.com/Jiawei98/BenignOverfitting/tree/main}.

\section{Model, Estimation, and Theoretical Properties}\label{ModTheory}
\subsection{Model}

Using a low-dimensional vector of commonly used predictor variables $\boldsymbol{G}_t \in \mathbb{R}^d$, we generate a high-dimensional set of random Fourier features $\boldsymbol{S}_t \in \mathbb{R}^P$ using the method of \citet{Rahimi2007}:
\begin{equation}
\boldsymbol{S}_t = \sqrt{{\frac2 P}} \left[ \sin(\psi\boldsymbol{w}^\prime \boldsymbol{G}_t), \cos(\psi\boldsymbol{w}^\prime \boldsymbol{G}_t) \right]^\prime,
\label{rffeq}
\end{equation}
where $\{ w_j \}_{j=1}^{P/2} \subset \mathbb{R}^d$ are independent and identically distributed (i.i.d.) from standard normal distribution $N(0,\mathrm{I}).$ We then use $\boldsymbol{S}_t$ in a linear model to predict the excess stock market return, $R_{t+1}$:
\begin{equation}
R_{t+1} = \beta_0 + \boldsymbol{S}_t^\prime \boldsymbol{\beta} + \varepsilon_{t+1}, \quad \mathbb{E}[\varepsilon_{t+1}] = 0,
\label{modelintercept}
\end{equation}
where $\beta_0$ is the intercept, $\boldsymbol{\beta} = (\beta_1, \beta_2, \ldots, \beta_P)^\prime \in \mathbb{R}^P,$ is the $P$-dimensional vector of slope parameters, and $\varepsilon_{t+1}$ the i.i.d. regression error with $\mathbb{E}[\varepsilon^2_{t+1}]=\sigma^2$. 

Or equivalently, a population mean-centered model can be written as
\begin{equation}
R_{t+1} - \mathbb{E} (R_{t+1})  = (\boldsymbol{S}_t -  \mathbb{E} \boldsymbol{S}_t)^\prime \boldsymbol{\beta} + \varepsilon_{t+1}, \mathbb{E}[\varepsilon_{t+1}] = 0.
\label{modelpc}
\end{equation}
And 
\begin{equation}
\beta_0 =  \mathbb{E} (R_{t+1}) - (\mathbb{E} \boldsymbol{S}_t)^\prime \boldsymbol{\beta}.
\label{interceptp}
\end{equation}

Models (e.g. \cite{Kelly2024}, \cite{hastie2022surprises}) are often set up as 
\begin{equation}
R_{pc,t+1} = \boldsymbol{S}_{pc,t}^\prime \boldsymbol{\beta} + \varepsilon_{t+1}, \quad \mathbb{E}[\varepsilon_{t+1}] = 0,
\label{modelpc0}
\end{equation}
where population-centered
$R_{pc,t+1} = R_{t+1} - \mathbb{E} (R_{t+1})$ and $\boldsymbol{S}_{pc,t} = \boldsymbol{S}_t -  \mathbb{E} \boldsymbol{S}_t$ partly for notational simplicity. And the theories are developed accordingly.
For example, \cite{Kelly2024} assume $\mathbb{E} (\boldsymbol{S}_t) = 0$ (their Assumption 2), $\mathbb{E}[\varepsilon_{t+1}] = 0$  (their Assumption 1), $\beta_0=0,$ and hence $ \mathbb{E} (R_{t+1}) = 0$. For theory and assumptions, we will also follow this convention without loss of generality and we will make model assumptions as in \cite{Kelly2024}. We will focus on the empirical implementations, results, and their connection with theory primarily.\footnote{We intentionally set up our model and model assumptions such as i.i.d. errors closely following the latest literature \cite{Kelly2024}, \cite{hastie2022surprises} as much as possible so that we can understand benign overfitting and detect the misunderstanding and the apparent illusion of Virtue of Complexity. The theory is developed under these model assumptions. In reality, the market return is a financial time series. The predictors are persistent. Returns are heteroskedastic. The rolling window size is not large. The related theory under general model assumption is a challenging open research area to explore in the future.}

\subsection{Estimation}

Empirically, the standard ridge formulation solves:
\begin{equation}
\begin{aligned}
(\hat{\beta}_0, \hat{\boldsymbol{\beta}}_z) = \arg\min_{\beta_0, \boldsymbol{\beta}} &\left\{ \frac{1}{T} \sum_{t=0}^{T-1} \left( R_{t+1} - \beta_0 - \boldsymbol{S}_t^\prime \boldsymbol{\beta} \right)^2 \right.\\
&\left. \qquad + z \sum_{j=1}^{P} \beta_j^2 \right\}.
\end{aligned}
\end{equation}
The regularization shrinkage is on $\sum_{j=1}^{P} \beta_j^2,$ and does not shrink the intercept $\beta_0$.\footnote{See \cite{Hastie2009}. To make it easy for comparison, our notation $z$ here is the same as that in KMZ, while the penalty parameter in \cite{Hastie2009} is $\lambda=z*T$. Note KMZ does NOT include intercept in their empirical study.} We observe the training sample in $T$ period $\{ \boldsymbol{S}_t, R_{t+1} \}_{t=0}^{T-1}.$ 
In empirical applications, predictors and returns naturally exhibit non-zero means. We estimate a ridge regression that penalizes the slope coefficients but leaves the intercept unpenalized. It is a standard algebraic result \citep[e.g.,][]{Hastie2009} that estimating a ridge regression with an unpenalized intercept on original data is exactly equivalent to first sample-centering the data, estimating a no-intercept ridge regression on the centered variables, and then recovering the intercept afterwards.

Formally, denote $\mathbf{R} = [R_1, \ldots, R_T]^\prime \in \mathbb{R}^T$ and $\mathbf{S} = [\boldsymbol{S}_0^\prime, \ldots, \boldsymbol{S}_{T-1}^\prime]^\prime \in \mathbb{R}^{T \times P}$. Define the sample means $\bar{R} = T^{-1} \sum_{t=1}^T R_{t}$ and $\bar{\boldsymbol{S}} \in \mathbb{R}^P,$ the column-wise sample mean of $\mathbf{S}$. For ease of exposition, we use the sample centered response $\boldsymbol{R}_c = \boldsymbol{R} - \overline{R}\mathbf{1}_T$ and the sample centered design matrix $\mathbf{S}_c=\boldsymbol{S} - \mathbf{1}_T \overline{\boldsymbol{S}}^\prime$, where $\mathbf{1_T}$ is a $T\times 1$ vector of ones. Then the estimation is equivalent to solving ridge regression on the mean-centered variables:
\begin{equation}
\hat{\boldsymbol{\beta}}_z = \arg\min_{\boldsymbol{\beta}} \left\{ {\frac 1 T} \left\| \boldsymbol{R}_c - \mathbf{S}_c\boldsymbol{\beta} \right\|^2 + z \| \boldsymbol{\beta} \|^2 \right\},
\label{ridge}
\end{equation}
with intercept estimate recovered by:
\begin{equation}
\hat{\beta}_0 = \bar{R} - \bar{\mathbf{S}}^\prime \hat{\boldsymbol{\beta}}_z.
\label{intercept}
\end{equation}

Let $(\boldsymbol{s}_{\mathrm{new}}, R_{\mathrm{new}})$ denote a new test pair drawn independently from the same distribution as the training
observations. Its prediction is
$$\hat{R}_{\mathrm{new}} = \hat{\beta}_0 + \boldsymbol{s}_{\mathrm{new}}^\prime \hat{\boldsymbol{\beta}}_z  = \bar{R} + (\boldsymbol{s}_{\mathrm{new}} - \bar{\mathbf{S}})^\prime \hat{\boldsymbol{\beta}}_z = \bar{R} + \boldsymbol{s}_{c,\mathrm{new}} ^\prime \hat{\boldsymbol{\beta}}_z.$$
The ridge regression estimator for the slope coefficients $\boldsymbol{\beta}$ is:
\begin{equation}
 \hat{\boldsymbol{\beta}}_{z} = \left(\frac{1}{T}\mathbf{S}_c^{\prime}\mathbf{S}_c + z \mathbf{I} \right)^{-1}\left(\frac{1}{T}\mathbf{S}_c^{\prime}\boldsymbol{R}_c\right). 
 \label{eq:betahat}
 \end{equation}
 The ridgeless (min-norm) estimator is when $z\to 0^+,$ or equivalently minimizing the squared error loss while minimizing its $l_2$ norm.


\subsection{Effective Degrees of Freedom: A Measure of Model Complexity}
\label{sec:EDF}

We further introduce a model complexity measure, \emph{effective degrees of freedom (EDF)} (See \citet[Chapter 3]{Hastie2009}), which may offer a more informative measure of model complexity than simply counting the number of predictors $P$, especially in regularized settings such as ridge regression.

Specifically, even if the number of predictors $P$ is larger, it does not necessarily mean the model is more complex. For example, a model with $P=1000$ but with large shrinkage parameter $z=10^9$ reduces essentially to a constant intercept model. This is effectively a simpler model than a model with $P=15$ but with $z=0.$ The former has effectively degrees of freedom of 1, i.e. only one intercept parameter to estimate, while the latter has effectively degrees of freedom of 16. 

Following the general definition of effective degrees of freedom due to \citet{Efron2004}, one can view EDF as quantifying how strongly the fitted values respond to the observed data, that is,
\[
\mathrm{EDF}
:= \frac{1}{\sigma^2} \sum_{t=0}^{T-1} \mathrm{Cov}(R_{t+1},\hat{R}_{t+1}),
\]
which reduces to the usual number of parameters in ordinary least squares (OLS).

In our ridge setting, let $\mathbf{S}_c$ denote the centered $T \times P$ design matrix and $\mathbf{R}_c$ the centered response vector.
For a given penalty $z>0$, the ridge fitted values satisfy
\begin{equation}
\begin{aligned}
\mathbf{S}_c \hat{\boldsymbol{\beta}}_z
&= \mathbf{H}_c(z)\,\mathbf{R}_c, \\
\quad\text{where}\quad
\mathbf{H}_c(z)
&= \frac{1}{T}\,\mathbf{S}_c
\Bigl(\tfrac{1}{T}\mathbf{S}_c^\prime\mathbf{S}_c + z\,\mathbf{I}_P\Bigr)^{-1}
\mathbf{S}_c^\prime.
\end{aligned}
\end{equation}
The matrix $\mathbf{H}_c(z)$ plays the role of a \emph{hat matrix} or \emph{smoother matrix}: it maps the observed outcomes to their fitted values.
When $z=0$ this reduces to the familiar OLS hat matrix, and in the nonparametric regression and generalized additive model literatures $\mathbf{H}_c(z)$ is often called the smoothing matrix \citep{HT1990}.

The effective degrees of freedom of the ridge estimator are equivalently defined as 
\begin{equation}
\label{eq:EDF-ridge-trace}
\mathrm{EDF}_{\text{ridge}}(z)
:= \operatorname{tr}\{\mathbf{H}_c(z)\} + 1,
\end{equation}
where the additional $1$ accounts for the intercept.
If $\tfrac{1}{T}\mathbf{S}_c^\prime\mathbf{S}_c$ has eigenvalues $d^2_1,\dots,d^2_P$, then \eqref{eq:EDF-ridge-trace} has the closed-form expression
$
\label{eq:EDF-ridge-eigs}
\mathrm{EDF}_{\text{ridge}}(z)
= \sum_{j=1}^{P} \frac{d^2_j}{d^2_j + z} + 1.
$
As $z \to 0$, we have $\mathrm{EDF}_{\text{ridge}}(z)$ goes to $P+1$, recovering the usual notion of degrees of freedom (the number of free parameters) in the OLS model.
As $z \to \infty$, we obtain $\mathrm{EDF}_{\text{ridge}}(z)$ goes to 1, corresponding to a pure intercept, i.e., the historical mean.
Thus, given $P$, $\mathrm{EDF}_{\text{ridge}}(z)$ is a decreasing function of $z$, reflecting how stronger regularization reduces the effective model complexity.

Overall, EDF combines both the nominal dimension $P$ and the amount of shrinkage $z$ into a single, interpretable complexity measure.
In high-dimensional financial applications with $P > T$, ridge regression can make heavily overparameterized models behave effectively as low-dimensional ones, and the effective degrees of freedom provides a convenient scalar summary of this shrinkage-induced reduction in model complexity.
A larger EDF indicates a more flexible, higher-variance fit, whereas a smaller EDF corresponds to a smoother, effectively simpler model.

\subsection{Theoretical Results under the Null of Zero Slope}

We are interested in the behavior of $\hat{\boldsymbol{\beta}}_{z}$ under the null hypothesis $\boldsymbol{\beta} = \boldsymbol{0}$, which states that none of the RFF terms have predictive power.  
Under the null hypothesis of zero slope, model~(\ref{modelintercept}) reduces to an intercept-only specification:
\begin{equation}
R_{t+1} = \beta_0 + \varepsilon_{t+1}, \quad \mathbb{E}[\varepsilon_{t+1}] = 0.
\label{null}
\end{equation}
Equation~(\ref{null}) implies that the historical average---that is, the mean return in the training sample---is the optimal forecast of the return in the next period.

Define the out-of-sample prediction error
\footnote{The prediction error defined in Page 958 \cite{hastie2022surprises} should be more precisely called as model error, where 
$$
\mathrm{ME}(z) \;=\; \mathbb{E}
\!\Bigl[\bigl((\hat\beta_0 + \boldsymbol{s}_{\mathrm{new}}^\prime
\hat{\boldsymbol{\beta}}_z)-(\beta_0 + \boldsymbol{s}_{\mathrm{new}}^\prime
{\boldsymbol{\beta}})\bigr)^2
\,\Big|\, \mathbf{S}\Bigr],
$$
which is equivalent to PE(z) minus a constant, the irreducible model error $\sigma^2$ (see Page 52 of \cite{Hastie2009}.) That is, $\mathrm{PE}(z)= ME(z) + \sigma^2.$ 
}
conditional on the training sample (see e.g., \cite{Hastie2009}, \citet{patil2021uniform}) as
\begin{equation}
\resizebox{0.5\linewidth}{!}{
$\mathrm{PE}(z) \;=\; \mathbb{E}_{(\boldsymbol{s}_{\mathrm{new}},\,R_{\mathrm{new}})}
\!\Bigl[\bigl(R_{\mathrm{new}} - (\hat\beta_0 + \boldsymbol{s}_{\mathrm{new}}^\prime
\hat{\boldsymbol{\beta}}_z)\bigr)^2
\,\Big|\, \mathbf{S},\,\mathbf{R}\Bigr].$}
\label{eq:PE}
\end{equation}

We derive the asymptotic results under the joint limit $T \to \infty$, $P \to \infty$ with $P/T \to \gamma \in (0, \infty)$.  
The assumption and proof that rely on a corollary and theorem from \citet{hastie2022surprises} involving random matrix theory are provided in Appendix~\ref{AppendixA1}. 

\begin{theorem} [High-dimensional $P$ Asymptotics] \label{theory1}
Under the null hypothesis of zero slope, the ridge regression optimal $z^{*}=\infty$, and under the optimal $z^{*}$, the prediction error (PE)
$$
PE(z^{*}) \rightarrow \sigma^2 
\quad \text{as} \quad P,T \to \infty \quad \text{with} \quad P/T \to \gamma \in (0,\infty).
$$
\end{theorem}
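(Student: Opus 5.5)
Under the null $\boldsymbol{\beta}=\boldsymbol{0}$, I would first decompose the prediction error (\ref{eq:PE}) into three pieces: irreducible noise, error in the intercept/historical mean, and error from the slope estimate. Since $R_{\mathrm{new}}=\beta_0+\varepsilon_{\mathrm{new}}$ and $\hat R_{\mathrm{new}}=\bar R+\boldsymbol{s}_{c,\mathrm{new}}'\hat{\boldsymbol{\beta}}_z$, expanding the square gives
\[
\mathrm{PE}(z)=\sigma^2+(\bar R-\beta_0)^2+\hat{\boldsymbol{\beta}}_z'\,\mathbb{E}\bigl[\boldsymbol{s}_{c,\mathrm{new}}\boldsymbol{s}_{c,\mathrm{new}}'\,\big|\,\mathbf{S}\bigr]\hat{\boldsymbol{\beta}}_z+2(\beta_0-\bar R)\,(\mathbb{E}\boldsymbol{S}-\bar{\mathbf{S}})'\hat{\boldsymbol{\beta}}_z .
\]
The cross term with $\varepsilon_{\mathrm{new}}$ vanishes because the new pair is independent of the training data. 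Following the paper's convention (population-centered features, $\mathbb{E}\boldsymbol{S}_t=0$), the mean-estimation terms are $O_p(1/T)$, and $\bar{\mathbf{S}}$ contributes only a rank-one perturbation. I would therefore reduce to the centered model (\ref{modelpc0}) with $\boldsymbol{\beta}=0$. In that model the remaining risk is $\sigma^2+\hat{\boldsymbol{\beta}}_z'\Sigma\hat{\boldsymbol{\beta}}_z$ with $\Sigma=\mathrm{Cov}(\boldsymbol{S}_t)$, and this is exactly the pure-variance setting of \citet{hastie2022surprises}.

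The key observation is then elementary and holds for every finite $T,P$. With $\hat{\boldsymbol{\beta}}_z=(\hat\Sigma_c+z\mathbf{I})^{-1}\mathbf{S}_c'\boldsymbol{\varepsilon}_c/T$, the excess risk is the nonnegative quadratic form
\[
V(z)=\tfrac{1}{T^2}\,\boldsymbol{\varepsilon}_c'\mathbf{S}_c(\hat\Sigma_c+z\mathbf{I})^{-1}\Sigma(\hat\Sigma_c+z\mathbf{I})^{-1}\mathbf{S}_c'\boldsymbol{\varepsilon}_c .
\]
Taking the conditional expectation over $\boldsymbol{\varepsilon}$ gives
\[
\frac{\sigma^2}{T}\operatorname{tr}\bigl\{\Sigma(\hat\Sigma_c+z\mathbf{I})^{-2}\hat\Sigma_c\bigr\}\,(1+o(1)),
\]
which is strictly decreasing in $z$ in the sense that $(\hat\Sigma_c+z\mathbf{I})^{-2}\hat\Sigma_c$ decreases in Loewner order. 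It tends to $0$ as $z\to\infty$. So the risk is minimized only in the limit $z^*=\infty$, where $\hat{\boldsymbol{\beta}}_z\to0$ and the forecast collapses to $\bar R$. This establishes the first claim, $z^*=\infty$, without any asymptotics.

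For the limit, I would invoke the ridge variance formula of \citet{hastie2022surprises} (their Theorem for the general-covariance ridge risk, with signal strength set to zero), as stated in Appendix~\ref{AppendixA1}. Under $P/T\to\gamma$ it gives, for each fixed $z>0$,
\[
\mathrm{PE}(z)\to\sigma^2+\sigma^2\gamma\int\frac{s\,(\ldots)}{(\ldots)}\,dH(s),
\]
a deterministic limiting variance $\mathcal V(z,\gamma)\ge0$ that is decreasing in $z$ and satisfies $\mathcal V(z,\gamma)\to0$ as $z\to\infty$. Combining this with the exact identity $\mathrm{PE}(\infty)=\sigma^2+(\bar R-\beta_0)^2\to\sigma^2$ almost surely (by the law of large numbers) yields $\mathrm{PE}(z^*)\to\sigma^2$.

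I expect the main obstacle to be that the random Fourier features in (\ref{rffeq}) do not literally satisfy the assumption of \citet{hastie2022surprises} that $\boldsymbol{S}_t=\Sigma^{1/2}\boldsymbol{Z}_t$ with i.i.d.\ entries. I would handle this by adopting the appendix assumption directly, as the paper does, that the features satisfy the conditions of that corollary. A second, more technical point is justifying the interchange of the limits $z\to\infty$ and $T\to\infty$. Here I would rely on the finite-sample monotonicity above. Because $\mathrm{PE}(z)\ge\sigma^2+(\bar R-\beta_0)^2+o_p(1)$ for all $z$ and equality holds at $z=\infty$, no uniform-in-$z$ random matrix control is needed. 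The rank-one centering correction $\bar{\mathbf{S}}$ only needs an interlacing argument to show that it does not change limiting spectral quantities.
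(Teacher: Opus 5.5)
Your argument is correct, after the small repairs listed below, and it takes a genuinely different route from the paper.

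\textbf{How the routes differ.} The paper works only with limiting risk formulas in the population-centered, no-intercept model:
\begin{itemize}
\item It obtains $z^*=\infty$ by formally setting $r^2=0$ in the optimal penalty $z^*=\gamma\sigma^2/r^2$ quoted from \citet{hastie2022surprises} for isotropic features.
\item It shows the limiting optimal model error $\gamma\sigma^2 m(-z^*)$ vanishes, using $m(-z)=1/z-1/z^2+o(1/z^2)$.
\item For general $H$, it expands the variance formula $V(z)$ as $z\to\infty$.
\end{itemize}
You instead show a finite-sample fact. Conditional on $\mathbf S$ and averaged over the training noise, the excess risk is exactly $\frac{\sigma^2}{T}\operatorname{tr}\{\boldsymbol{\Sigma}(\hat\Sigma_c+z\mathbf I)^{-2}\hat\Sigma_c\}$, with $\hat\Sigma_c=\mathbf S_c'\mathbf S_c/T$. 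This is decreasing in $z$ for every finite $T,P$ and every $\boldsymbol{\Sigma}$.

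\textbf{What your route buys.} It gives $z^*=\infty$ without random-matrix theory. It avoids substituting $r^2=0$ into a formula derived for positive signal, and it needs no interchange of $z\to\infty$ with $T\to\infty$. The bound $d/(d+z)^2\le 1/(4z)$ even gives excess risk at most $\sigma^2P\|\boldsymbol{\Sigma}\|_{\mathrm{op}}/(4zT)$, uniformly in $T$. Once $z^*=\infty$ is known, $\mathrm{PE}(\infty)=\sigma^2+\bar\varepsilon^{\,2}\to\sigma^2$ by the strong law. So three of your steps are not needed for the theorem as stated: the appeal to the variance formula of \citet{hastie2022surprises}, the worry that random Fourier features are not of the form $\boldsymbol{\Sigma}^{1/2}\boldsymbol X_t$, and the interlacing step.

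\textbf{What the paper's route buys.} It delivers the explicit limiting curve at finite $z$, for example $\gamma\sigma^2\{m(-z)-zm'(-z)\}$ in the isotropic case. That curve underlies the double-descent discussion but is not part of the theorem.

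\textbf{Three points to tighten.}
\begin{enumerate}
\item Your monotonicity concerns the risk averaged over $\boldsymbol\varepsilon$ given $\mathbf S$. That is the risk of \citet{hastie2022surprises} and of the paper's footnoted ME, but not literally \eqref{eq:PE}, which also conditions on $\mathbf R$. For a fixed noise draw, $\hat{\boldsymbol\beta}_z'\boldsymbol{\Sigma}\hat{\boldsymbol\beta}_z$ need not be monotone in $z$ when $\boldsymbol{\Sigma}$ and $\hat\Sigma_c$ do not commute. State which risk $z^*$ minimizes; the realized version then follows asymptotically by concentration of the quadratic form in $\boldsymbol\varepsilon$.
\item Drop the factor $(1+o(1))$. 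The identity is exact because $\mathbf S_c'\mathbf 1_T=\mathbf 0$, and a $z$-dependent error term would spoil the monotonicity argument.
\item Do not discard the mean terms as $O_p(1/T)$; note that $\|\bar{\mathbf S}\|^2$ is of order $P/T$, not $1/T$. Keep them instead, writing $\boldsymbol\delta=\bar{\mathbf S}-\mathbb E\boldsymbol S$. The cross term, whose sign you reversed, vanishes after averaging over $\boldsymbol\varepsilon$, again because $\mathbf S_c'\mathbf 1_T=\mathbf 0$. The noise-averaged risk is then exactly $\sigma^2(1+1/T)+\frac{\sigma^2}{T}\operatorname{tr}\{(\boldsymbol{\Sigma}+\boldsymbol\delta\boldsymbol\delta')(\hat\Sigma_c+z\mathbf I)^{-2}\hat\Sigma_c\}$. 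This is monotone by the same Loewner argument, so no reduction to the centered model is needed.
\end{enumerate}
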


\subsection{Ridge Regression Penalty Parameter and Cross-Validation}
\label{sec:ridge_cv}

For ridge regression, the shrinkage penalty parameter $z$ is very important. Theoretically, the optimal tuning shrinkage parameter is 
$
z^* = \arg\min_{z \in
\mathcal{Z}}\,\mathrm{PE}(z) 
$
over a candidate set $\mathcal{Z},$ 
which depends on the unknown data–generating mechanism, including the signal–to–noise ratio and the feature covariance structure. 

Empirically, $\mathrm{PE}(z)$ cannot be computed directly because the true
data-generating process is unknown. Cross-validation (CV) \citep{Hastie2009} is commonly adopted to choose the data-driven shrinkage penalty parameter by using the observed data itself to estimate $\mathrm{PE}(z)$ for each candidate $z$, and then selecting the value that minimizes this data-driven estimate.  

Cross-validation (CV) \citep{Hastie2009} is commonly adopted to choose the data-driven shrinkage penalty parameter by using the observed data itself to estimate $\mathrm{PE}(z)$ for each candidate $z$, and then selecting the value that minimizes this data-driven estimate. The most
direct variant is leave-one-out cross-validation: for each
observation $i$, the model is refit on the remaining observations, and the prediction error on the held-out observation is recorded. Averaging these held-out errors gives an estimate of how well the model predicts out-of-sample. 

For each $t = 0,\ldots,T-1$, let $\hat{\boldsymbol{\beta}}_z^{(-i)}$ denote the
ridge estimator trained on all observations except the $i$-th. The CV error
is defined as:
\begin{equation}
\mathrm{CV}(z) \;=\; \frac{1}{T}\sum_{t}
\Bigl(R_{c,t+1} - S_{c,t}^\prime \hat{\boldsymbol{\beta}}_z^{(-i)}\Bigr)^2,
\label{eq:loocv_def}
\end{equation}
where the ridge slope $\hat{\boldsymbol{\beta}}_z^{(-i)}$
is trained on all but left-out data point.

We provide the theory that the CV-tuned ridge estimator is asymptotically optimal closely following \cite{patil2021uniform} with relaxed assumptions and stronger uniform consistency results than \cite{hastie2022surprises}.
\begin{theorem}[CV Uniform Consistency, \cite{patil2021uniform}]
\label{thm:cv}
Under Assumptions A1 to A4 in Appendix~\ref{AppendixA2}, let $P, T \to \infty$
with $P/T \to \gamma \in (0,\infty)$. For every compact interval $\mathcal{Z}
\subseteq (z_{\min}, \infty)$,
\begin{equation}
\sup_{z \in \mathcal{Z}}\,\bigl|\mathrm{CV}(z) - \mathrm{PE}(z)\bigr|
\;\xrightarrow{a.s.}\; 0.
\label{eq:gcv_unif}
\end{equation}
Consequently, the CV-tuned estimator achieves the same asymptotic prediction error as the optimally-tuned estimator:
\begin{equation}
\mathrm{PE}(\hat{z}_{\mathrm{CV}}) - \mathrm{PE}(z^*)
\;\xrightarrow{a.s.}\; 0.
\label{eq:cv_optimal}
\end{equation}
\end{theorem}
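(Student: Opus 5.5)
The plan has two parts. First I establish the uniform consistency \eqref{eq:gcv_unif} by following the route of \citet{patil2021uniform}. Then I derive the optimality statement \eqref{eq:cv_optimal} from it by a standard argmin argument.

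\textbf{Step 1 (shortcut formula for CV).} Leave-one-out CV for ridge admits the exact closed form
\[
\mathrm{CV}(z)=\frac1T\sum_{t}\Bigl(\frac{R_{c,t+1}-\boldsymbol{S}_{c,t}^\prime\hat{\boldsymbol\beta}_z}{1-[\mathbf{H}_c(z)]_{tt}}\Bigr)^2 .
\]
It follows from the Sherman--Morrison formula applied to $\tfrac1T\mathbf{S}_c^\prime\mathbf{S}_c+z\mathbf I$. The centering by the full-sample means introduces an $O(1/T)$ perturbation, which I will show is uniformly negligible over $z\in\mathcal Z$ because $z$ is bounded away from $z_{\min}$.

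\textbf{Step 2 (leverages concentrate).} Under A1--A4 (features $\boldsymbol{S}_t=\Sigma^{1/2}\boldsymbol{x}_t$ with bounded moments, spectrum of $\Sigma$ converging, bounded $\|\boldsymbol\beta\|$, and noise variance $\sigma^2$), I show that $\max_t\bigl|[\mathbf{H}_c(z)]_{tt}-\tfrac1T\operatorname{tr}\mathbf{H}_c(z)\bigr|\to0$ a.s. The argument writes each diagonal entry as $1-z\,\boldsymbol{s}_t^\prime(\cdot)^{-1}\boldsymbol{s}_t/T$ after a leave-one-out rank-one update. Quadratic-form concentration (Bai--Silverstein type lemmas) then gives the convergence, with a union bound handled by high-moment bounds and Borel--Cantelli. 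The residual terms are treated the same way, so that $\mathrm{CV}(z)$ is asymptotically equivalent to GCV, $\frac{\frac1T\|\boldsymbol R_c-\mathbf S_c\hat{\boldsymbol\beta}_z\|^2}{(1-\operatorname{tr}\mathbf H_c(z)/T)^2}$.

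\textbf{Step 3 (pointwise limit equality).} Using the deterministic equivalents of \citet{hastie2022surprises} (as in Theorem~\ref{theory1}), I show that the numerator and denominator of GCV each converge a.s. to explicit functionals of the Stieltjes transform of the limiting spectral law. I also show that $\mathrm{PE}(z)$ converges to $\sigma^2$ plus the bias and variance terms. I then verify algebraically that the ratio equals this limit. This is the usual identity linking the training error, the companion transform $v(-z)$, and the derivative term in the variance.

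\textbf{Step 4 (uniformity and tuning).} For $z$ in a compact $\mathcal Z\subset(z_{\min},\infty)$, both $\mathrm{CV}$ and $\mathrm{PE}$ have $z$-derivatives bounded a.s. uniformly in $T$, since all resolvents are bounded by $1/(z-z_{\min})$. Equicontinuity plus pointwise a.s. convergence on a countable dense set then upgrades Step 3 to the supremum in \eqref{eq:gcv_unif}. Finally, with $\hat z_{\mathrm{CV}}$ and $z^*$ taken as minimizers over $\mathcal Z$,
\[
0\le \mathrm{PE}(\hat z_{\mathrm{CV}})-\mathrm{PE}(z^*)\le [\mathrm{PE}(\hat z_{\mathrm{CV}})-\mathrm{CV}(\hat z_{\mathrm{CV}})]+[\mathrm{CV}(z^*)-\mathrm{PE}(z^*)]\le 2\sup_{\mathcal Z}|\mathrm{CV}-\mathrm{PE}|\to0 .
\]
This gives \eqref{eq:cv_optimal}.

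\textbf{Main obstacle.} I expect the hardest step to be Step 2's uniform control of the leverages and residuals over all $t$ simultaneously. It needs sharp moment bounds rather than plain Gaussianity, and the sample-centering (rather than population-centering) must be handled within the same rank-one perturbation framework. Under the null $\boldsymbol\beta=\mathbf 0$ this simplifies, since only the variance term survives. I would present the general case by citing \citet{patil2021uniform} for the concentration lemmas.
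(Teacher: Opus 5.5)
Your plan matches the paper's argument. Appendix~\ref{AppendixA2} likewise writes leave-one-out CV through the Sherman--Morrison--Woodbury shortcut and replaces the leverages by $\operatorname{tr}(\mathbf{H}_c(z))/T$ to obtain GCV. It then cites \citet{patil2021uniform} for $\sup_{z\in\mathcal Z}|\mathrm{GCV}(z)-\mathrm{PE}(z)|\to 0$ and $\sup_{z\in\mathcal Z}|\mathrm{CV}(z)-\mathrm{GCV}(z)|\to 0$ almost surely, so your Steps 2--4, including the explicit two-sided argmin bound, unpack that citation.

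One adjustment is needed in Step 3. A3 and A4 only bound the eigenvalues of $\boldsymbol{\Sigma}$ and $\|\boldsymbol{\beta}_0\|_2$, and nothing in A1--A4 posits a limiting spectral law, nor the limiting $\boldsymbol{\beta}_0$-weighted spectral measure that the general-covariance bias limit of \citet{hastie2022surprises} requires. You should therefore show $\mathrm{GCV}(z)-\mathrm{PE}(z)\to 0$ directly through $P$-dependent deterministic equivalents, as Patil et al.\ do, rather than by computing separate limits of each side.
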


We note that the familiar double-descent phenomenon emerges naturally even under the null hypothesis of zero slope coefficients ($\boldsymbol{\beta} = \mathbf{0}$). To understand the statistical mechanics behind this, it is instructive to evaluate the out-of-sample prediction risk through its fundamental decomposition: the sum of irreducible noise ($\sigma^2$), bias risk, and variance risk. Following Theorem 1 of \cite{hastie2022surprises}, under isotrpoic features and high-dimensional asymptotics for a fixed parameter-to-observation ratio $\gamma = P/T > 1$, the prediction risk of the ridgeless estimator decomposes into an explicit bias component, $\|\boldsymbol{\beta}\|^2 \frac{\gamma}{\gamma - 1}$, and a variance component, $\sigma^2 \frac{1}{\gamma - 1}$. Under the null hypothesis, the lack of a true predictive signal renders the bias risk exactly zero. The out-of-sample error is therefore entirely driven by the variance component, $\sigma^2 \frac{1}{\gamma - 1}$. Hence, near the interpolation threshold ($\gamma \to 1$), the variance risk explodes, producing the characteristic spike in error; yet, as $\gamma$ grows increasingly large, the variance risk monotonically vanishes, generating the second ``descent.'' Because the irreducible noise represents the minimum prediction risk, eliminating both bias and variance in this highly overparameterized limit forces the total prediction risk to approach the global minimum, $\sigma^2$. Economically, this implies that the high-dimensional architecture simply collapses to the optimal forecast: the historical average.

\section{Empirical Results}

\subsection{Model Specification and Evaluation Metrics}\label{Model}

We evaluate an RFF model constructed from the 14 predictor variables in \citet{Welch2008Comprehensive}, which includes \texttt{dfy}, \texttt{infl}, \texttt{svar}, \texttt{de}, \texttt{lty}, \texttt{tms}, \texttt{tbl}, \texttt{dfr}, \texttt{dp}, \texttt{dy}, \texttt{ltr}, \texttt{ep}, \texttt{b/m}, \texttt{ntis}, plus the lagged excess equity market return, in total 15 variables. More detail about variables is in Appendix~\ref{AppendixB}. And we use the monthly data from 1927 to 2020 as the full sample.

With the 15 predictor variables, the RFF features are constructed from Equation~(\ref{rffeq}), where $G_t$ is the $15 \times 1$ predictor vector, $w_j$ is a $15 \times 1$ random weight vector drawn i.i.d.\ from a standard normal distribution, and $\psi = 2$ is the bandwidth parameter controlling the Gaussian kernel bandwidth. Since the RFF weight matrix $\boldsymbol{w}$ is drawn randomly, out-of-sample performance estimates can be noisy. To address this issue, the entire procedure is repeated 100 times with independent draws of $\boldsymbol{w}$, each repetition using a distinct random seed, we then calculate the average return across these 100 independent draws as the portfolio return. All reported statistics---including predicted returns, managed portfolio returns, out-of-sample $R^2$, Sharpe ratios, and alphas---are based on the averaged return. The number of RFF features is $P = 12{,}000$ per repetition following KMZ, e.g., $\gamma=P/T=1000$ in 12-month training window.

Following KMZ, before generating the RFF features, returns are volatility standardized by their trailing 12-month return standard deviation, which is computed from the uncentered second moment. Predictors are standardized using an expanding window historical standard deviation. At least 36 months of data are required for the initial standardization, so the usable sample begins in 1930. To perform regression estimation at each time $t$, the training-sample RFF features and the out-of-sample RFF vector are further standardized by their standard deviations and mean computed within the training window. 

We make out-of-sample forecasts using a rolling window. At each month $t$, the model is estimated using the most recent $T=12$ months of data to generate a one-step-ahead predicted excess market return, $\hat{R}_{t+1}$. For brevity, the results for $T \in \{60, 120\}$ are mainly reported in the Appendix \ref{AppendixC}. We follow \citet{Welch2008Comprehensive} in dividing the sample into pre- and post-1975 subsamples, in addition to the full sample. For the early sample, the testing windows are 1931--1974, 1935--1974, and 1940--1974 for training sample lengths of 12, 60, and 120 months, respectively. For the late sample, the testing window is 1975--2020 for all training sample lengths. For the full sample (1926--2020), the testing windows are 1931--2020, 1935--2020, and 1940--2020 for training sample lengths of 12, 60, and 120 months, respectively. 

We report results for four regularization specifications: the ridgeless case ($z = 0$), which imposes minimal shrinkage; the heavily penalized case ($z = 10^9$), which collapses the model toward the historical average; and $z = 10^3$, which achieves the highest Sharpe ratio across the wide range of regularization parameters considered in \citet{Kelly2024}. We additionally report results for the cross-validated specification, where the ridge penalty is selected via rolling cross-validation at each forecast origin. For comparison, we consider a standard linear model incorporating the 15 original predictors, an intercept-only specification serving as the historical-average benchmark, and the buy-and-hold market portfolio return.

For each model, the foretasted market excess return $\hat{R}_{t+1}$ scales the position in the market portfolio, yielding a managed portfolio excess return of $R^p_{t+1} = \hat{R}_{t+1} R_{t+1}$. As $R_{t+1}$ is volatility-standardized using prior data, it has a higher mean than the raw return because the return volatility is always less than one. We evaluate the performance of the model through four primary metrics: 

\begin{itemize}
    \item[1.] The out-of-sample $R^2$ \citep{campbell2008predicting},
    $$R^2_{\text{OOS}} = 1 - \frac{\sum_t (R_{t+1} - \hat{R}_{t+1})^2}{\sum_t (R_{t+1} - \bar{R}_{t+1})^2},$$
    quantifies the reduction in mean squared prediction error relative to the historical-average benchmark $\bar{R}_{t+1}$, which is calculated by averaging the return from $R_{t-T+1}$ to $R_{t}$. By construction, $R^2_{\text{OOS}}$ is positive if the model outperforms the benchmark and zero if the forecast converges to $\bar{R}_{t+1}$.

    \item[2.] The Sharpe ratio \citep{Sharpe1966, Sharpe1994},
    $$\text{SR} = \frac{\mathbb{E}[R^{p}_{t+1}]}{\sqrt{\mathrm{Var}(R^{p}_{t+1})}},$$
    is a standard measure of risk-adjusted economic performance.

    \item[3.] The correlation of managed portfolio return with the historical-average benchmark return, which measures how closely the managed portfolio co-moves with the historical-average portfolio.

    \item[4.] Two risk-adjusted alpha measures. $\alpha_1$ is the standard CAPM alpha, obtained by regressing the managed portfolio return on an intercept and the market return. $\alpha_2$ is obtained by regressing the managed portfolio return on an intercept, the market return, and historical-average return. For each alpha measure, we report its associated $t$-statistic.
\end{itemize}

\subsection{RFF Model versus Intercept-Only Model}\label{Empirics}

\begin{table}
\TABLE
{Market-Timing Performance: 12-Month Training Windows\label{tab:12month_s1}}
{%
\begin{tabular}{{l L C R R R R R}}
\toprule
& Portfolio & $k$ & $\text{OOS } R^2$ & SR & Corr & $t_{\alpha_1}$ & $t_{\alpha_2}$ \\
& & $(z = 10^k)$ & & & & & \\
\midrule
\multicolumn{8}{l}{\textbf{Panel A: 1931--1974}} \\
\midrule
1  & Market   &           &         & 0.46 &      &       &        \\
2  & History  &           &         & 0.52 &      & 3.00  &        \\
3  & RFF      & 9         &  0.00   & 0.52 & 1.00 & 3.00  &  1.05  \\
4  & Linear   & 9         &  0.00   & 0.52 & 1.00 & 3.00  & -0.24  \\
5  & RFF      & 3         &  0.00   & 0.55 & 0.96 & 3.32  &  1.32  \\
6  & Linear   & 3         &  0.00   & 0.52 & 1.00 & 3.00  & -0.25  \\
7  & RFF      & Ridgeless & -0.02   & 0.56 & 0.82 & 3.59  &  1.49  \\
8  & RFF      & CV        & -0.01   & 0.53 & 0.90 & 3.21  &  0.82  \\
\midrule
\multicolumn{8}{l}{\textbf{Panel B: 1975--2020}} \\
\midrule
9  & Market   &           &         & 0.56 &      &       &        \\
10 & History  &           &         & 0.35 &      & 0.93  &        \\
11 & RFF      & 9         &  0.00   & 0.35 & 1.00 & 0.93  & -0.02  \\
12 & Linear   & 9         & -0.00   & 0.35 & 1.00 & 0.93  & -1.21  \\
13 & RFF      & 3         & -0.00   & 0.35 & 0.98 & 0.92  &  0.04  \\
14 & Linear   & 3         & -0.00   & 0.35 & 1.00 & 0.92  & -1.22  \\
15 & RFF      & Ridgeless & -0.01   & 0.34 & 0.91 & 0.97  &  0.24  \\
16 & RFF      & CV        & -0.01   & 0.34 & 0.96 & 0.86  & -0.18  \\
\midrule
\multicolumn{8}{l}{\textbf{Panel C: 1931--2020}} \\
\midrule
17 & Market   &           &         & 0.51 &      &       &        \\
18 & History  &           &         & 0.43 &      & 2.70  &        \\
19 & RFF      & 9         &  0.00   & 0.43 & 1.00 & 2.70  &  0.91  \\
20 & Linear   & 9         & -0.00   & 0.43 & 1.00 & 2.70  & -1.02  \\
21 & RFF      & 3         &  0.00   & 0.45 & 0.97 & 2.91  &  1.16  \\
22 & Linear   & 3         & -0.00   & 0.43 & 1.00 & 2.70  & -1.02  \\
23 & RFF      & Ridgeless & -0.01   & 0.45 & 0.86 & 3.18  &  1.46  \\
24 & RFF      & CV        & -0.01   & 0.43 & 0.93 & 2.82  &  0.69  \\
\bottomrule
\end{tabular}%
}{\textbf{Note:} Market denotes the equity market portfolio. History denotes the historical average benchmark. RFF denotes the nonlinear model with 12{,}000 random Fourier features. Linear denotes the linear model of the 15 original predictor variables. Both features and response are demeaned prior to estimation ($\mathrm{dX}=1$, $\mathrm{dY}=1$). $k$ denotes the regularization exponent ($z = 10^k$); Ridgeless denotes the ridgeless estimator ($z = 0$); CV denotes cross-validation selected $z$. $t$-statistics use Newey--West standard errors with four lags, approximating the optimal $\text{Lag} =  4 \left(T/100\right)^{2/9}$ in \cite{newey1994automatic}.}
\end{table}

Table~\ref{tab:12month_s1} reports the out-of-sample performance of models estimated using a 12-month training window under the highly overparameterized specification of $12{,}000$ random Fourier features, the most complex model examined by \citet{Kelly2024}. As illustrated in their Figure 8, this specific architecture yields the highest Sharpe ratio for a given level of regularization, serving as the basis for their virtue of complexity conclusion. Panels A, B, and C detail the results for the pre-1975, post-1975, and full sample periods, respectively.

Theorem~\ref{theory1} establishes that, under the null hypothesis of zero slope coefficients, the RFF model with an intercept mechanically converges to the historical average as the ridge penalty $z \to \infty$. Our results are consistent with this implication. At the extreme regulation level of $z = 10^9$, the RFF forecast achieves a correlation of 1.00 with the historical-average benchmark and an essentially zero $R^2_{\text{OOS}}$ across all three panels of Table~\ref{tab:12month_s1} (rows 3, 11, and 19). Furthermore, the two models yield nearly identical Sharpe ratios and CAPM $\alpha$ statistics, confirming that the heavily regularized RFF model contains no marginal predictive content beyond that of the unconditional mean.

The ridgeless ($z \to 0^+$) specification yields a negative $R^2_{\text{OOS}}$ across all panels ($-0.02$ in Panel A, $-0.01$ in Panel B, and $-0.01$ in Panel C), establishing that its point forecasts slightly underperform the historical average. Although the CAPM alpha appears statistically significant ($t_{\alpha_1} = 3.59$ in Panel A and $3.18$ in Panel C), such significance vanishes completely once we control for the historical-average managed portfolio ($t_{\alpha_2} = 1.49$, $0.24$, and $1.46$, respectively). The disappearance of the alpha reveals that any perceived market-timing ability is entirely subsumed by the historical-average benchmark. Even absent an explicit ridge penalty, the ridgeless specification generates managed portfolio returns that correlate heavily with that simple benchmark (0.82, 0.91, and 0.86). As we discuss in Section \ref{Model}, these findings reflect double descent. In a heavily overparameterized regime---where the feature space $P=12,000$ dwarfs the sample size of $T=12$---the ridgeless estimator effectively converges toward the true underlying model of zero slope coefficients. Ultimately, rather than demonstrating a virtue of complexity, our results suggest that the model's performance is driven purely by a mechanical convergence toward the historical average.

At regularization $z = 10^3$---KMZ's preferred specification---the RFF model yields a virtually zero $R^2_{\text{OOS}}$ across Panels A through C, alongside Sharpe ratios ($0.55$, $0.35$, and $0.45$) that are similar to, or slightly higher than, those of the ridgeless estimator. Furthermore, the managed portfolio returns maintain near-perfect correlation with the historical-average benchmark across all panels ($0.96$, $0.98$, and $0.97$). Consequently, the controlled alpha ($\alpha_2$) remains statistically insignificant in all periods ($t_{\alpha_2} = 1.32$, $0.04$, and $1.16$). These results are again entirely consistent with the theoretical dominance of optimally regularized ridge regression, which efficiently shrinks the estimator back toward the true underlying model of zero slope coefficients.

A fixed ridge penalty, such as $z=10^3$ in KMZ using the full data, is not consistent with a real-time out-of-sample forecasting design unless it is determined solely from the training data available at each forecast origin. Accordingly, we follow Theorem \ref{thm:cv} to estimate the optimal penalty parameter for ridge regression using cross-validation within each training window. At each month $t$, we select the regularization parameter $z$ through a one-step-ahead validation scheme: we use months $t-T+1$ through $t-1$ as the training set and month $t$ as the validation set, choosing the value of $z$ in the range $[10^{-9}, 10^{9}]$ that minimizes the validation error. The model is then refit on the full window from month $t-T+1$ through $t$ using the chosen $z$ to make the out-of-sample forecast $\hat{R}_{t+1}$. As with the fixed-$z$ specifications, this procedure is repeated across 100 independent sets of RFF weights, and the resulting forecasts are averaged to form the final prediction. As reported in rows 8, 16, and 24 of Table~\ref{tab:12month_s1}, the CV portfolio delivers an out-of-sample $R^2$ of $-0.01$ in both sub-periods and the full sample, with Sharpe ratios of 0.53, 0.34, and 0.43 the corresponding $t_{\alpha_2}$ statistics (0.82, -0.18, and 0.69) and correlation with history average (0.90, 0.96, and 0.93) confirm that the CV portfolio's return is statistically indistinguishable from the historical-average benchmark.

The validation-optimal regularization parameter $z$ varies across seeds due to different random weight matrix $\boldsymbol{w}$. To summarize the optial penalty, we calculate the median (mean) of the CV-selected $z$ across the 100 seeds at each forecast month $t+1$, and then take the time-series median of these monthly values. The resulting median-of-medians (median-of-means) for $z$ is $4.58\times 10^{3}$ ($4.70 \times 10^{8}$) for the 12-month window, with corresponding EDF medians of 2.49. Notably, cross-validation frequently selects the maximum regularization value of $10^{9}$ (in 47.68\% of months).\footnote{Results are similar for 60- and 120-month training windows. The median-of-medians (median-of-means) for $z$ is $3.59 \times 10^{2}$ ($4.40 \times 10^{8}$) for the 60-month window, and $1.87 \times 10^{2}$ ($4.30 \times 10^{8}$) for the 120-month window, with corresponding EDF medians of 20.91 and 40.84. Cross-validation selects the maximum regularization value of $10^{9}$ in 44.26\% and 41.79\% of months, respectively.} This extreme shrinkage pushes the slope coefficients toward zero, effectively collapsing the model onto the historical average. Ultimately, these results suggest that the RFF features yield negligible incremental forecasting power, supporting the zero-slope hypothesis of Theorem~\ref{theory1} that the model's predictions do not improve upon the historical average.

The sub-period results reinforce this interpretation and further explain the predictive power of the RFF models. In the early sample (1931--1974, Panel~A), the market portfolio attains an annualized Sharpe ratio of 0.46, while the historical average benchmark achieves 0.52 with a statistically significant CAPM $\alpha$ ($t_{\alpha_1} = 3.00$). This indicates that the average return over the prior 12 months possesses strong market-timing ability during this period.\footnote{Its correlation with the one-month-ahead equity market return is significant at the 1\% level (untabulated).} However, this significance disappears in the later sample (1975--2020, Panel~B). In contrast to the early period, all models exhibit a lower Sharpe ratio than the market portfolio, and none produce a statistically significant CAPM~$\alpha$ at the 10\% level. This breakdown in predictive power demonstrates that the prior 12-month average return is no longer informative in the more recent sample. When pooling the full 1931--2020 period (Panel~C), the deterioration in the second subsample drags down overall performance, resulting in the market portfolio achieving a higher Sharpe ratio than all other models.

Finally, Figure~\ref{fig:alpha2_tstatd} presents the Newey–West $t$-statistic of $\alpha_2$ for the 12-month RFF model across a wide range of regularization values $z$ for the pre-1975, post-1975, and full sample periods. All $t_{\alpha_2}$ are below the 10\% significance threshold, confirming that the absence of incremental predictability is robust to the choice of regularization parameter and sub-sample periods. Taken together, the evidence across all three sub-periods consistently indicates that the RFF model, whether ridgeless or regularized, does not enhance the performance beyond the historical-average managed benchmark.


\begin{figure}[h]
\centering 
\includegraphics[width=0.8\textwidth]{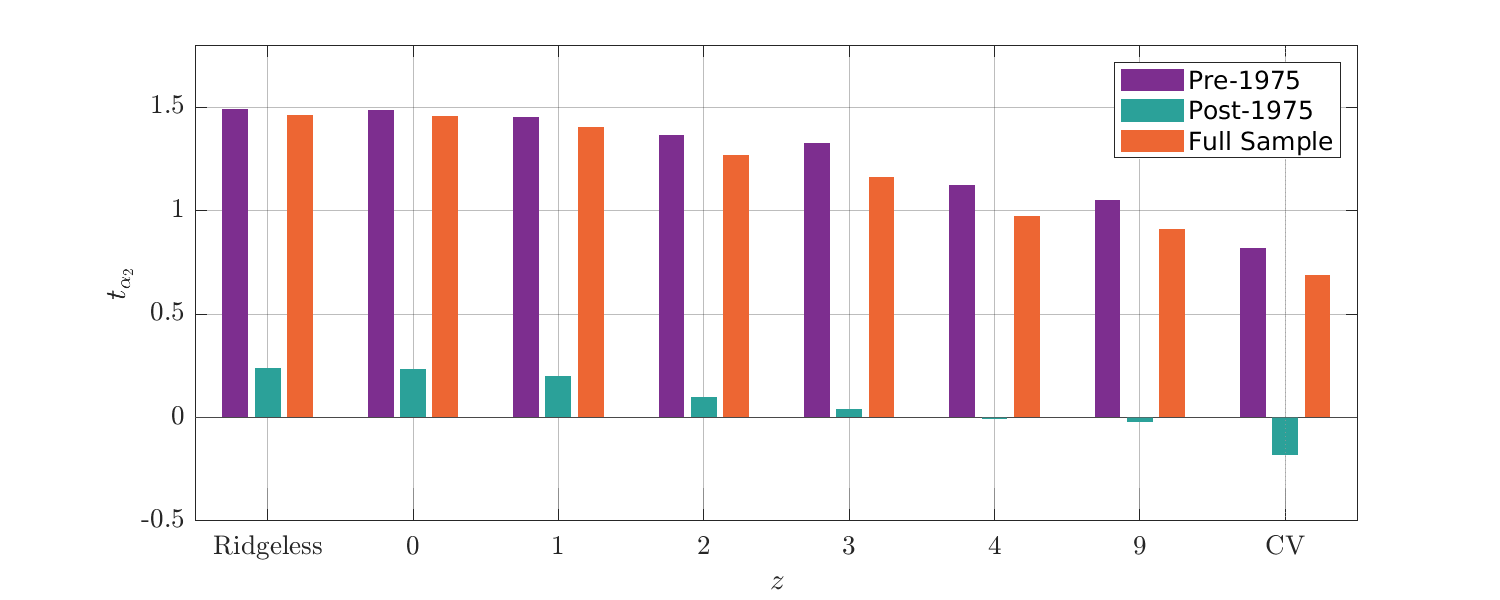}
\caption{Newey--West $t$-Statistic of $\alpha_2$ for the RFF Model across regularization values: 12-Month Training Windows}
\label{fig:alpha2_tstatd}
\end{figure}

\subsection{RFF Model and Momentum Strategies} \label{MomStra}
Our results show that RFF models have market timing ability similar to that of the historical average. The evidence is consistent with KMZ's conjecture that an RFF model estimated over a short 12-month training window may simply capture the positive serial correlation in stock market returns in the presence of highly persistent predictor variables. To better understand the underlying economic drivers of this overparameterized ridge regression, we compare its forecasts against a comprehensive suite of closely related momentum-based strategies. Our main empirical finding is that, after controlling for the historical-average benchmark, none of these strategies deliver significant abnormal performance across the major sample periods. This result indicates that, like the RFF model, their apparent predictive power merely reflects the same variation already embedded in the historical average.

We begin with the time-series momentum benchmark emphasized by KMZ:
\[
\hat{\mu}^{\text{TsMom}}_{t+1|t}
=
\frac{\frac{1}{12}\sum_{k=0}^{11} R_{t-k}}
{\sqrt{\frac{1}{12}\sum_{j=0}^{11}\left(R_{t-j}-\left(\frac{1}{12}\sum_{k=0}^{11}R_{t-k}\right)\right)^2}}.
\]
KMZ show that the abnormal return of the RFF model is not explained by this momentum model. Another important benchmark is the volatility-timed momentum strategy studied by \citet{Nagel2025}, who establishes an analytical link between the RFF model with a 12-month training window and the strategy
$
\hat{\mu}^{\text{VolMom}}_{t+1|t}
=
0.05 \frac{1}{\hat{\sigma}^2_{x,t}}
\sum_{k=0}^{11}\frac{12-k}{78}R_{t-k},
$
where $\hat{\sigma}^2_{x,t}$ denotes the average variance of the 15 original predictor variables in the 12-month training sample ending in month $t$. For robustness, we also consider four related benchmark strategies discussed by \citet{Kelly2025}:
\begin{align*}
\hat{\mu}^{\text{History}}_{t+1|t}=\frac{1}{12}\sum_{k=0}^{11}R_{t-k}, \quad
\hat{\mu}^{\text{DW}}_{t+1|t}=\sum_{k=0}^{11}\frac{12-k}{78}R_{t-k}, \\
\hat{\mu}^{\text{PV}}_{t+1|t}=\frac{1}{\hat{\sigma}^2_{x,t}}, \quad
\hat{\mu}^{\text{EWPV}}_{t+1|t}=\frac{1}{\hat{\sigma}^2_{x,t}}\sum_{k=0}^{11}R_{t-k}.
\end{align*}
These strategies isolate different components of the short-window forecasting rule: the historical average, duration-weighted momentum, predictor volatility, and a simple interaction between predictor volatility and the historical average.

\begin{table}
\TABLE
{Historical Average Benchmark and Momentum Strategies: 12-Month Training Windows\label{tab:mom12month}}
{\begin{tabular}{l L R R R R}
\toprule
& Portfolio & SR & Corr & $t_{\alpha_1}$ & $t_{\alpha_2}$ \\
\midrule
\multicolumn{6}{l}{\textbf{Panel A: 1931--1974}} \\
\midrule
1  & History & 0.52 &      &  2.99 &        \\
2  & TsMom   & 0.54 & 0.95 &  2.93 & -0.21  \\
3  & VolMom  & 0.52 & 0.64 &  2.49 &  0.64  \\
4  & DW      & 0.55 & 0.93 &  3.38 &  1.65  \\
5  & PV      & 0.54 & 0.41 &  2.32 &  0.96  \\
6  & EWPV    & 0.53 & 0.65 &  2.59 &  0.58  \\
\midrule
\multicolumn{6}{l}{\textbf{Panel B: 1975--2020}} \\
\midrule
7  & History & 0.34 &      &  0.85 &        \\
8  & TsMom   & 0.38 & 0.94 &  1.03 &  0.51  \\
9  & VolMom  & 0.19 & 0.77 &  0.00 & -1.20  \\
10 & DW      & 0.25 & 0.93 &  0.77 & -0.27  \\
11 & PV      & 0.50 & 0.49 &  0.22 & -0.31  \\
12 & EWPV    & 0.30 & 0.82 &  0.48 & -0.72  \\
\midrule
\multicolumn{6}{l}{\textbf{Panel C: 1931--2020}} \\
\midrule
13 & History & 0.43 &      &  2.70 &        \\
14 & TsMom   & 0.46 & 0.95 &  2.77 &  0.19  \\
15 & VolMom  & 0.38 & 0.66 &  2.17 &  0.26  \\
16 & DW      & 0.41 & 0.93 &  2.95 &  1.06  \\
17 & PV      & 0.50 & 0.42 &  2.10 &  0.96  \\
18 & EWPV    & 0.43 & 0.68 &  2.45 &  0.46  \\
\bottomrule
\end{tabular}}{\textbf{Note:} History denotes the historical average benchmark. TsMom denotes time-series momentum. VolMom denotes volatility-scaled momentum. DW denotes the Dynamically Weighted strategy. PV and EWPV denote the Parametric and Equal-Weighted Parametric Volatility strategies. Corr denotes the correlation of each portfolio return with the historical average. $t_{\alpha_1}$ and $t_{\alpha_2}$ denote $t$-statistics for the intercept from regressions on the market and on the market plus historical average, respectively, using Newey--West standard errors with four lags.}
\end{table}

Table~\ref{tab:mom12month} summarizes the main empirical result. Across three periods, these momentum strategies fail to generate significant abnormal returns once we control for its correlation with the historical average benchmark. Moreover, in the post-1975 sample, none delivers a statistically significant CAPM $\alpha$. Thus, while these strategies can provide useful economic descriptions of the short-window RFF forecast, their out-of-sample performance appears to be largely explained by the same benchmark variation captured by the historical average.

This perspective also helps reconcile subsequent interpretations of the RFF model. KMZ view time-series momentum as a natural benchmark motivated by persistent regressors, but find that it does not account for the model's market-timing performance. \citet{Nagel2025} instead emphasizes a mechanical link to volatility-timed momentum, while \citet{Kelly2025} argue that predictor-volatility-based strategies provide a more accurate economic distillation of the RFF model. Our results suggest a more direct interpretation: these alternative benchmarks remain closely tied to the historical average, which emerges naturally as the limiting forecast of ridge regression with an intercept under the null hypothesis of zero slope coefficients as $z\to\infty$.

\subsection{Predictor-Block Randomization Test} \label{Boot}

Our results show a close relationship between the RFF model and the historical average benchmark. To formally assess whether the economic predictors used in the RFF model has genuine predictive content, we implement a block-randomization procedure inspired by the block bootstrap of \citet{kunsch1989jackknife}.

At each forecast time $t+1$, we replace the RFF predictor with a randomly selected contiguous block of RFF predictors of the same length $T$, while keeping the realized return series fixed. This procedure preserves the serial dependence and marginal distribution of the predictor, as well as the realized time-series properties of returns, but breaks the temporal alignment between predictors and returns. The prediction on block bootstrap data therefore provides a null benchmark for model performance under the hypothesis that the RFF predictors contain no forecasting power for returns. We repeat the block bootstrap procedure $B=100$ times across all months. For each randomization, we re-fit the model following the same cross-validation procedure used in Section \ref{Empirics}, and compute the Sharpe ratio, out-of-sample $R^2$, $t_{\alpha_1}$, and $t_{\alpha_2}$. The resulting 100 statistics form the null distribution against which we test the zero-slope hypothesis.

\begin{table}
\TABLE
{Bootstrap Null Distribution of Out-of-Sample Performance: 12-Month Window\label{tab:bootstrap12}}
{%
\begin{tabular}{lRRRR}
\toprule
& SR & OOS $R^2$ & $t_{\alpha_1}$ & $t_{\alpha_2}$ \\
\midrule
Original Data     & 0.43 & $-$0.01 & 2.82 & 0.67 \\
Percentile        & 33.00 &   0.00 & 8.00 & 3.00 \\
Mean              & 0.43 &    0.00 & 2.88 & 1.33 \\
Median            & 0.43 &    0.00 & 2.88 & 1.36 \\
Std               & 0.01 &    0.00 & 0.05 & 0.36 \\
5th Pct           & 0.42 &    0.00 & 2.81 & 0.77 \\
95th Pct          & 0.44 &    0.00 & 2.97 & 1.95 \\
\midrule
Pooled Percentile & 11.00 &   0.00 & 69.33 & 1.00 \\
\bottomrule
\end{tabular}%
}{\textbf{Note:} Original Data denotes the out-of-sample performance of the model estimated on the original data. Percentile denotes the percentile of the original data performance within the window-specific bootstrap null distribution. Pooled Percentile reports, for $t_{\alpha_1}$ and $t_{\alpha_2}$, the percentile of the original data statistic within the bootstrap null distribution pooled across all training windows. Mean, Median, Std, 5th Pct, and 95th Pct summarise the distribution of out-of-sample performance across $B=100$ bootstrap replications, each using $100$ independently drawn sets of random Fourier features. SR denotes the annualized Sharpe ratio of the market timing strategy. OOS $R^2$ denotes the out-of-sample $R^2$ of the return forecast. $t_{\alpha_1}$ and $t_{\alpha_2}$ denote the Newey-West $t$-statistics from regressing the strategy return on the market ($\alpha_1$) and on the market plus the historical average ($\alpha_2$).}
\end{table}

Table~\ref{tab:bootstrap12} reports the bootstrap distribution of out-of-sample performance over the full sample for the 12-month training window. The original Sharpe ratio falls at the 33rd percentile of the bootstrap distribution, indicating that the model fails to deliver statistically significant market-timing performance relative to the bootstrap null. We observe a similar pattern for the out-of-sample $R^2$. In addition, while the resampled predictors consistently generate a positive and significant CAPM $\alpha_1$ (the fifth percentile of $t_{\alpha_1}$ equals 2.81), this abnormal return vanishes once we control for the historical average (the 95th percentile of $t_{\alpha_2}$ equals 1.95). These results replicate the close relationship between the RFF model and the historical average documented in the actual data. Moreover, the bootstrap distribution confirms that the original $\alpha_1$ is insignificant at conventional levels, demonstrating that the abnormal return of the RFF model simply reflects the historical average. Taken together, these findings provide little evidence that the model estimated on the original data outperforms the bootstrap benchmark. Instead, its observed timing performance is statistically indistinguishable from that generated by randomly resampled predictor blocks, which, by construction, contain no predictive information about future returns.

\subsection{Different Training Windows}\label{windows}
Although KMZ focus primarily on a 12-month training window, they also evaluate 60- and 120-month horizons. For comparison, Figures~\ref{fig:alpha2_tstatd_60} and \ref{fig:alpha2_tstatd_120} in the Appendix \ref{AppendixC} summarize our main results for these extended periods. The overall patterns largely mirror the 12-month setting in Figure~\ref{fig:alpha2_tstatd}, except that $\alpha_2$ reaches 5\% significance over the 120-month horizon when a low regularization penalty is applied. This significance, however, must be interpreted cautiously. 

In both the pre- and post-1975 subsamples, $\alpha_2$ fails to reach the 5\% significance threshold across all regularization values. Moreover, even if these estimates were statistically reliable, they would lack economic significance, as investors would have to endure extended wait times to realize any potential gains. Additionally, this apparent market-timing ability suffers from substantial look-ahead bias, as it implicitly assumes investors can identify a low regularization parameter ex ante. In practice, investors must estimate this optimal penalty. When we apply the cross-validated penalty value, $\alpha_2$ becomes statistically insignificant at the 5\% level across all three sample periods. Finally, standard critical values fail to account for the multiple testing of three window sizes across various penalties, exposing the initial finding to data-mining bias.

To address these biases, we employ the bootstrap methodology outlined in Section~\ref{Boot} and report the findings in Table~\ref{tab:bootstrap120}. We rely exclusively on the cross-validated tuning parameter to eliminate look-ahead bias. To correct for multiple testing across the 12-, 60-, and 120-month windows, we pool their empirical distributions to evaluate out-of-sample performance. While the actual data yields $t_{\alpha_2}=1.90$ (marginally significant at the 10\% level), this $t$-statistic ranks at only the 74th percentile of the simple 120-month bootstrap distribution and the 62nd percentile of the pooled distribution.

While KMZ emphasize a short training window to highlight the virtue of complexity, existing literature often employs longer samples to capture persistent variations in equity risk premia. To evaluate our results against this standard, we implement an expanding training window initialized in 1930. Our robustness check is motivated by \citet{Welch2008Comprehensive}, who demonstrate that out-of-sample predictive power of linear models deteriorates after 1974, likely because many established predictor variables suffer from data-mining bias: they often forecast equity market returns only within the specific samples used in the original studies that documented their significance. To mitigate this bias, we restrict our test sample to the post-1974 period (1975--2020). Untabulated results confirm that the RFF model possesses negligible market-timing ability in this period, generating a CAPM $\alpha$ that remains statistically insignificant at conventional levels.

Taken together, our evaluation across varying training sample sizes and rigorous out-of-sample tests confirms that the RFF model lacks robust, economically meaningful market-timing ability.

\subsection{Reconciling with KMZ's Empirical Specification and Related Literature}\label{rem:KMZnointercept}
KMZ exclude an intercept in their RFF ridge regression while employing non-centered market returns and non-centered features. This empirical specification is inconsistent with their theoretical population-centered model. Although \citet{Kelly2025} justify this specification by arguing that ``the monthly mean return is close to zero (0.007 in the KMZ sample),'' the average scaled market return used in the actual ridge regressions is $16\%$, with a $t$-statistic of 5. This indicates that the omitted intercept is both economically large and statistically significant. Consequently, the no-intercept specification is severely misspecified, especially under the null hypothesis of zero slope coefficients. Nevertheless, their main findings are strikingly similar to ours.

\begin{table}
\TABLE
{Market-Timing Performance: 12-Month Training Windows with KMZ No Intercept with Non-centered Market Return\label{tab:12month_center}}
{\begin{tabular}{l L C R R R R R R R R}
\toprule
\multicolumn{3}{c}{} & \multicolumn{4}{c}{\textbf{Non-centered Predictors (KMZ)}} & \multicolumn{4}{c}{\textbf{Centered Predictors}} \\
\cmidrule(lr){4-7} \cmidrule(lr){8-11}
& Portfolio & $k$ & SR & Corr & $t_{\alpha_1}$ & $t_{\alpha_2}$
                  & SR & Corr & $t_{\alpha_1}$ & $t_{\alpha_2}$ \\
& & $(z = 10^k)$ & & & & & & & & \\
\midrule
\multicolumn{11}{l}{\textbf{Panel A: 1931--1974}} \\
\midrule
1  & Market  &           &  0.46 &       &        &        &  0.46 &        &        &        \\
2  & History &           &  0.52 &       &  3.00  &        &  0.52 &        &  3.00  &        \\
3  & RFF     & 9         &  0.27 &  0.37 &  0.78  & -0.18  &  0.18 &  0.04  &  1.13  &  1.02  \\
4  & Linear  & 9         &  0.36 &  0.44 &  1.64  &  0.32  & -0.08 & -0.07  & -0.43  & -0.27  \\
5  & RFF     & 3         &  0.54 &  0.49 &  2.84  &  1.55  &  0.21 &  0.02  &  1.44  &  1.31  \\
6  & Linear  & 3         &  0.56 &  0.98 &  3.03  &  0.62  & -0.08 & -0.07  & -0.44  & -0.28  \\
7  & RFF     & Ridgeless &  0.46 &  0.36 &  2.81  &  1.62  &  0.26 &  0.08  &  1.84  &  1.48  \\
8  & RFF     & CV        &  0.41 &  0.28 &  1.83  &  1.13  &  0.17 &  0.05  &  1.02  &  0.82  \\
\midrule
\multicolumn{11}{l}{\textbf{Panel B: 1975--2020}} \\
\midrule
9  & Market  &           &  0.56 &       &        &        &  0.56 &        &        &        \\
10 & History &           &  0.35 &       &  0.93  &        &  0.35 &        &  0.93  &        \\
11 & RFF     & 9         &  0.39 &  0.58 &  1.40  &  0.85  & -0.02 & -0.05  & -0.13  & -0.06  \\
12 & Linear  & 9         &  0.27 &  0.62 &  0.41  & -0.45  & -0.28 & -0.13  & -1.30  & -1.24  \\
13 & RFF     & 3         &  0.39 &  0.54 &  1.32  &  0.83  & -0.00 & -0.05  & -0.04  &  0.02  \\
14 & Linear  & 3         &  0.38 &  0.99 &  0.99  &  0.41  & -0.28 & -0.13  & -1.30  & -1.24  \\
15 & RFF     & Ridgeless &  0.33 &  0.44 &  1.25  &  0.84  &  0.06 &  0.05  &  0.26  &  0.22  \\
16 & RFF     & CV        &  0.29 &  0.37 &  1.10  &  0.73  &  0.08 &  0.16  &  0.18  & -0.03  \\
\midrule
\multicolumn{11}{l}{\textbf{Panel C: 1931--2020}} \\
\midrule
17 & Market  &           &  0.51 &       &        &        &  0.51 &        &        &        \\
18 & History &           &  0.43 &       &  2.70  &        &  0.43 &        &  2.70  &        \\
19 & RFF     & 9         &  0.25 &  0.36 &  0.94  &  0.06  &  0.10 &  0.01  &  0.88  &  0.87  \\
20 & Linear  & 9         &  0.31 &  0.53 &  1.48  & -0.26  & -0.18 & -0.10  & -1.25  & -1.06  \\
21 & RFF     & 3         &  0.46 &  0.50 &  3.02  &  1.81  &  0.12 & -0.00  &  1.15  &  1.14  \\
22 & Linear  & 3         &  0.46 &  0.98 &  2.75  &  0.56  & -0.18 & -0.10  & -1.26  & -1.06  \\
23 & RFF     & Ridgeless &  0.40 &  0.38 &  2.96  &  1.88  &  0.18 &  0.07  &  1.67  &  1.45  \\
24 & RFF     & CV        &  0.35 &  0.31 &  2.09  &  1.36  &  0.13 &  0.09  &  0.99  &  0.71  \\
\bottomrule
\end{tabular}}{\textbf{Note:} Market denotes the equity market portfolio. History denotes the historical average benchmark. RFF denotes the nonlinear model with 12{,}000 random Fourier features. Linear denotes the linear model of the 15 original predictor variables. Non-centered Predictors (KMZ): neither features nor response are demeaned ($\mathrm{dX}=0$, $\mathrm{dY}=0$). Centered Predictors: features are demeaned but response is not ($\mathrm{dX}=1$, $\mathrm{dY}=0$). $k$ denotes the regularization exponent ($z = 10^k$); Ridgeless denotes the ridgeless estimator ($z = 0$); CV denotes cross-validation selected $z$. $t$-statistics use Newey--West standard errors with four lags.}
\end{table}

To explain this puzzling similarity, we must recognize that by forcing the regression through the origin on non-centered variables, KMZ artificially constrain their model. Because the true benchmark under the null of zero slopes requires an intercept, this misspecification mathematically forces the uncentered slope coefficients to absorb the historical average---a problem severely exacerbated when persistent predictors exhibit minimal variation over a short 12-month training window. Table~\ref{tab:12month_center} empirically validates this mechanism: once predictors are appropriately centered, the model can no longer implicitly synthesize an intercept, causing its market-timing performance to evaporate. This evidence substantiates the conceptual critique by \citet{Berk2023} regarding the omitted intercept. We demonstrate that, even though KMZ's main findings survive the inclusion of an intercept, the RFF model itself offers no genuine market-timing ability beyond the historical average benchmark.

A related concern involves their choice of benchmark. KMZ evaluate time-series momentum---defined as the historical average divided by its standard deviation---and conclude that it does not explain the RFF model's performance. We show that this choice is not entirely appropriate under the null of zero slopes and prove that the simple historical average, a standard benchmark widely used in previous studies, provides a theoretically more relevant comparison. Furthermore, \citet{Kelly2025} argues that the volatility-managed momentum proposed by \citet{Nagel2025} is an inadequate benchmark because it is an \textit{ex post} distillation of KMZ's findings rather than a pre-existing strategy. Our results demonstrate that the simple historical average matches the performance of volatility-managed momentum in explaining the RFF model's market-timing ability, successfully overcoming Kelly's critique without relying on any \textit{ex post} construction.

Several other contemporaneous studies also scrutinize KMZ's main empirical findings. \citet{Buncic2025} demonstrate that the virtue of complexity vanishes when (i) an intercept is included in the ridge regressions and (ii) forecasts across random Fourier feature draws are aggregated using an alternative method. \citet{Cartea2025} and \citet{Fallahgoul2025} clarify the virtue of complexity by focusing on noisy predictors and small sample sizes, respectively. While \citet{Kelly2025} provide detailed responses to these critiques, our paper offers new insights into this ongoing debate. By illustrating the limitations of complex machine learning models in equity market return prediction, we clarify a common misinterpretation of benign overfitting in this literature. Although double descent is a genuine statistical phenomenon, it does not imply the virtue of complexity---namely, that big models inherently outperform small models. Under the null hypothesis of zero slope coefficients, the complex model does not outperform the simple intercept-only model. Our evidence shows that this is indeed the case, indicating that the RFF model does not possess genuine out-of-sample forecasting power.

\section{Conclusion}\label{Conclusion}

The recent success of highly parameterized machine learning models in many fields has motivated their application to the central asset-pricing problem of market return predictability. We show that the prevailing intuition of benign overfitting from modern machine learning must be interpreted carefully in this setting characterized by notoriously weak signals. Although the ridgeless random Fourier features estimator exhibits the familiar double-descent phenomenon, this pattern does not necessarily imply a virtue of complexity, where large models inherently outperform small ones. 

Specifically, when the slope coefficients of these features are zero, our theory implies that the optimal ridge penalty satisfies $z^\ast=\infty$. Hence, the asymptotically best large model converges to the historical average---a standard benchmark that has proven difficult to beat in the existing literature. Our empirical findings support this scenario. The RFF model closely tracks the historical average and exhibits negligible incremental market-timing ability. In this sense, our paper reconciles modern and classical statistical learning in asset pricing: overparameterization can generate benign-overfitting and double-descent patterns, but when the predictors carry little true signal, the best regularized large model shrinks back to the simplest forecast---the historical average.

Importantly, our results do not suggest that equity market risk premia are constant, nor do they reject the broader promise of machine learning in asset pricing. The data-generating process of equity market returns is far more complicated than in fields where highly parameterized models typically exhibit superior performance. For instance, combination forecasts have been shown to consistently outperform the historical average benchmark \citep{rapach2010out}, and underlying predictive relationships are inherently time-varying as they critically depend on investors' beliefs. Machine learning architectures explicitly tailored to capture these specific dynamics of equity market returns might still yield significant out-of-sample predictive power.

\begingroup \parindent 0pt \parskip 0.0ex \def\enotesize{\normalsize} \theendnotes \endgroup

%
%
%
%
%

\subsection*{Acknowledgments}
We are grateful to the reviewers of STAI-X conferecne and Yichen Qin for valuable comments and suggestions. We thank Bryan Kelly, Semyon Malamud, Kangying Zhou for sharing code.

\newpage



\bibliographystyle{informs2014} 
\bibliography{sample} 

@article{bartlett2020benign,
  title={Benign overfitting in linear regression},
  author={Bartlett, Peter L and Long, Philip M and Lugosi, Gábor and Tsigler, Alexander},
  journal={Proceedings of the National Academy of Sciences},
  volume={117},
  number={48},
  pages={30063-30070},
  year={2020}
}

@unpublished{Berk2023,
  author  = {Jonathan B. Berk},
  title   = {Comment on “The Virtue of Complexity in Return Prediction”},
  note    = {Working Paper, Stanford University},
  year    = {2023},
}

@unpublished{Cartea2025,
  author  = {Cartea, Alvaro and Jin, Qi and Shi, Yuantao},
  title   = {The Limited Virtue of Complexity in a Noisy World},
  note    = {Working Paper, University of Oxford},
  year    = {2025},
}

@unpublished{Fallahgoul2025,
  author  = {Fallahgoul, Hasan},
  title   = {High-Dimensional Learning in Finance},
  note    = {Working Paper, Monash University},
  year    = {2025},
}

@unpublished{Nagel2025,
  author  = {Nagel, Stefan},
  title   = {Seemingly Virtuous Complexity in Return Prediction},
  note    = {Working Paper, Booth School of Business},
  year    = {2025}
}

@unpublished{Buncic2025,
  author  = {Buncic, Daniel},
  title   = {Simplified: A Closer Look at the Virtue of Complexity in Return Prediction},
  note    = {Working Paper, Stockholm University},
  year    = {2025}
}

@book{Hastie2009,
  title={The Elements of Statistical Learning: Data Mining, Inference, and Prediction},
  author={Hastie, Trevor and Tibshirani, Robert and Friedman, Jerome},
  year={2009},
  publisher={Springer},
  edition={2nd}
}

@article{belkin2019reconciling,
  title={Reconciling modern machine-learning practice and the classical bias--variance trade-off},
  author={Belkin, Mikhail and Hsu, Daniel and Ma, Siyuan and Mandal, Saurabh},
  journal={Proceedings of the National Academy of Sciences},
  volume={116},
  number={32},
  pages={15849--15854},
  year={2019}
}

@article{hastie2022surprises,
  title={Surprises in high-dimensional ridgeless least squares interpolation},
  author={Hastie, Trevor and Montanari, Andrea and Rosset, Saharon and Tibshirani, Ryan J},
  journal={The Annals of Statistics},
  volume={50},
  number={2},
  pages={949--986},
  year={2022}
}

@article{tsigler2023benign,
  title = {Benign Overfitting in Ridge Regression},
  author = {Tsigler, Alexander and Bartlett, Peter L.},
  journal = {Journal of Machine Learning Research},
  volume = {24},
  pages = {123--131},
  year = {2023}
}

@article{mei2022generalization,
  title={The generalization error of random features regression: Precise asymptotics and double descent curve},
  author={Mei, Song and Montanari, Andrea and Nguyen, Phan-Minh},
  journal={Annals of Statistics},
  volume={50},
  number={3},
  pages={1706--1734},
  year={2022}
}

@inproceedings{Rahimi2007,
 author = {Rahimi, Ali and Recht, Benjamin},
 booktitle = {Advances in Neural Information Processing Systems},
 editor = {J. Platt and D. Koller and Y. Singer and S. Roweis},
 pages = {},
 publisher = {Curran Associates, Inc.},
 title = {Random Features for Large-Scale Kernel Machines},
  volume = {20},
 year = {2007}
}

@article{Kelly2024,
author = {Kelly, Bryan and Malamud, Semyon and Zhou, Kangying},
title = {The Virtue of Complexity in Return Prediction},
journal = {The Journal of Finance},
volume = {79},
number = {1},
pages = {459-503},
year = {2024}
}

@techreport{Kelly2025,
author = {Kelly, Bryan and Malamud, Semyon},
title = {Understanding the Virtue of Complexity},
  note    = {Working Paper, Yale University},
year = {2025}
}

@article{campbell2008predicting,
  title={Predicting excess stock returns out of sample: Can anything beat the historical average?},
  author={Campbell, John Y. and Thompson, Samuel B.},
  journal={Review of Financial Studies},
  volume={21},
  number={4},
  pages={1509--1531},
  year={2008},
  publisher={Society for Financial Studies}
}

@article{Goyal2024,
    author = {Goyal, Amit and Welch, Ivo and Zafirov, Athanasse},
    title = {A Comprehensive 2022 Look at the Empirical Performance of Equity Premium Prediction},
    journal = {The Review of Financial Studies},
    volume = {37},
    number = {11},
    pages = {3490-3557},
    year = {2024}
}

@article{rapach2010out,
  title={Out-of-sample equity premium prediction: Combination forecasts and links to the real economy},
  author={Rapach, David E. and Strauss, Jack K. and Zhou, Guofu},
  journal={Review of Financial Studies},
  volume={23},
  number={2},
  pages={821--862},
  year={2010},
  publisher={Society for Financial Studies}
}

@BOOK{HT1990,
  AUTHOR = {Hastie, T. J. and Tibshirani, R. J.},
  TITLE = {Generalized additive models},
  YEAR = {1990},
  PAGES = {335},
  ISBN = {0412343908},
  PUBLISHER = {London: Chapman \& Hall}
}

@article{patil2021uniform,
  title={Uniform consistency of cross-validation estimators for high-dimensional ridge regression},
  author={Patil, Pratik and Wei, Yuting and Rinaldo, Alessandro and Tibshirani, Ryan J.},
  journal={Proceedings of the 24th International Conference on Artificial Intelligence and Statistics},
  volume={130},
  pages={3178--3186},
  year={2021},
  publisher={PMLR}
}

@article{golub1979generalized,
  author    = {Golub, Gene H. and Heath, Michael and Wahba, Grace},
  title     = {Generalized Cross-Validation as a Method for Choosing
               a Good Ridge Parameter},
  journal   = {Technometrics},
  year      = {1979},
  volume    = {21},
  number    = {2},
  pages     = {215--223}
}

@article{Dow1920,
  author  = {Dow, Charles H.},
  title   = {Scientific Stock Speculation},
  journal = {The Magazine of Wall Street},
  year    = {1920}
}

@article{Sharpe1966,
  author  = {Sharpe, William F.},
  title   = {Mutual Fund Performance},
  journal = {The Journal of Business},
  year    = {1966},
  volume  = {39},
  number  = {1},
  pages   = {119--138},
  doi     = {10.1086/294846}
}

@article{Sharpe1994,
  author  = {Sharpe, William F.},
  title   = {The Sharpe Ratio},
  journal = {The Journal of Portfolio Management},
  year    = {1994},
  volume  = {21},
  number  = {1},
  pages   = {49--58}
}

@article{Efron2004,
  author  = {Bradley Efron},
  title   = {The Estimation of Prediction Error: Covariance Penalties and Cross-Validation},
  journal = {Journal of the American Statistical Association},
  year    = {2004},
  volume  = {99},
  number  = {467},
  pages   = {619--632},
  doi     = {10.1198/016214504000000692}
}

@article{newey1994automatic,
  title={Automatic lag selection in covariance matrix estimation},
  author={Newey, Whitney K and West, Kenneth D},
  journal={The review of economic studies},
  volume={61},
  number={4},
  pages={631--653},
  year={1994},
  publisher={Wiley-Blackwell}
}

@article{Welch2008Comprehensive,
  title={A comprehensive look at the empirical performance of equity premium prediction},
  author={Goyal, Amit and Welch, Ivo},
  journal   = {The Review of Financial Studies},
  volume={21},
  number={4},
  pages={1455--1508},
  year={2008},
  publisher={Society for Financial Studies}
}

@article{kunsch1989jackknife,
  title={The jackknife and the bootstrap for general stationary observations},
  author={K{\"u}nsch, Hans R},
  journal={The annals of Statistics},
  pages={1217--1241},
  year={1989},
  publisher={JSTOR}
}

\newpage

\appendix
%
\setcounter{table}{0}

\renewcommand{\thetable}{A\arabic{table}}

\renewcommand{\theHtable}{A\arabic{table}}

\setcounter{figure}{0}
\renewcommand{\thefigure}{A\arabic{figure}}
\renewcommand{\theHfigure}{A\arabic{figure}}

\section{Assumptions and Proofs}

\subsection{Assumption and Proof of Theorem~\ref{theory1}: High-dimensional \texorpdfstring{$P$}{P} Asymptotics}
\label{AppendixA1}

We first provide Assumption and Proof of Theorem~\ref{theory1}: High-dimensional $P$ Asymptotics. In all theories and proofs, we follow the convention working under the population-centered model (\ref{modelpc0}) as in \cite{Kelly2024}, \cite{hastie2022surprises} \cite{patil2021uniform} and omit subscript ``pc'' throughout for notational simplicity.

{\bf[Assumption for Theorem~\ref{theory1} ]}

The predictors vector $\boldsymbol{S}_t \sim P_{\boldsymbol{S}}$ is of the form $\boldsymbol{S}_t = \boldsymbol{\Sigma}^{1/2}\boldsymbol{X}_t$, where defining 
\[
\widehat{H}_T(s) := \frac{1}{P} \sum_{j=1}^P 1_{\{s \ge s_j\}}.
\]
Assume
\begin{itemize}
    \item[(a)] The vector $\boldsymbol{X}_t = (X_{1,t}, \dots, X_{P,t})$ has independent but not necessarily identically distributed elements with $\mathbb{E}[X_{j,t}] = 0$, $\mathbb{E}[X_{j,t}^2] = 1$, and $\mathbb{E}[|X_{j,t}|^k] \le C_k < \infty$ for all $j \le P, k \ge 2$.
    \item[(b)] $s_1 = \|\boldsymbol{\Sigma}\|_{\mathrm{op}} \le M$, $\int s^{-1} d\widehat{H}_T(s) < M$.
    \item[(c)] $|1 - (P/T)| \ge 1/M$, $1/M \le P/T \le M$.
\end{itemize}
Here $M$ and $\{C_k\}$ are large constants. 

\begin{proof} 
We first show the case under the assumption of isotropic features using Corollary 6 of \citet{hastie2022surprises} that the limiting ridge regression risk is minimized at $z^{*}= \gamma \sigma^2/ r^2$, where $r^2 = \| \boldsymbol{\beta} \|^2.$ 

Under the null hypothesis of market return unpredictability, i.e., $\boldsymbol{\beta} = 0$ and $r^2 = 0,$ the ridge regression optimal $$z^{*}=\gamma \sigma^2/ r^2 = \infty.$$ 

Under the optimal $z^{*}=\infty,$ we next show that the model error $ME(z^{*})$, or equivalently the excess prediction error, $PE(z^{*})-\sigma^2,$ we have 
$$
ME(z^{*}) \rightarrow 0 
\quad \text{as} \quad P,T \to \infty \quad \text{with} \quad P/T \to \gamma \in (0,\infty).
$$
Corollary 6 of \citet{hastie2022surprises} gives that $
ME(z^{*}) \rightarrow \gamma \sigma^2 m(-z^{*}),$
where 
$$m(z) = \frac{1 - \gamma - z - \sqrt{(1 - \gamma - z)^2 - 4\gamma z}}{2\gamma z}.$$

We need to  evaluate the limit of $m(-z)$ as $z \to \infty$.
When $z \to \infty,$ $m(-z)$ has the expansion:
$$
m(-z) = \frac{1} {z} - \frac{1} {z^2} + o(\frac{1} {z^2}). 
$$
And $m^{\prime}(-z)$ has the expansion:
$$
m^{\prime}(-z) = \frac{1} {z^2} - \frac{2} {z^3} + o(\frac{1} {z^3}). 
$$
Take the limit as $z \to \infty$, for $\gamma \in (0,\infty) $:
$$\lim_{z \to \infty} m(-z) = \lim_{z \to \infty} \frac{1}{z} = 0, $$
Hence, $$
ME(z^{*}) \rightarrow \gamma \sigma^2 m(-z^{*}) = 0.$$

Note that for a fixed $z>0,$ the variance is not vanishing and $$ME(z) \rightarrow \gamma \sigma^2 (m(-z)-zm^{\prime}(-z)),$$ even though the bias square term vanishes under the null hypothesis of return unpredictability. This is different from the result of classical ridge regression asymptotics that $\|\hat{\boldsymbol{\beta}}_z\|_2 \rightarrow 0$ and $ME(z) \rightarrow 0$ for a fixed $z>0$ under the classical ridge regression with finite $P.$

We then show the case under general covariance matrix $\Sigma$ with spectral distribution $H$ features. 
Denote the Stieltjes transform:
\[
    m(-z) = \int \frac{1}{s + z} \, dH(s), \quad m'(-z) = \int \frac{1}{(s + z)^2} \, dH(s).
\]

Using Theorem 6 of \citet{hastie2022surprises}, the asymptotic bias term $B(z)$ vanishes when $z \to \infty $ as $r^2=0$. The asymptotic variance term is:
\[
    V(z) = \gamma \sigma^2 \int \frac{s^2 (1 - \gamma + \gamma z^2 m'(-z))}{[z + s(1 - \gamma + \gamma z m(-z))]^2} \, dH(s).
\]

We use the expansions for the Stieltjes transform for large $z$, assuming $H$ has finite moments. Substitute into the variance formula. The term inside the parenthesis in the numerator is
$$
1 - \gamma + \gamma z^2 m'(-z)
= 1  -\frac{2\gamma} {z} + o\left(\frac{1}{z}\right) .
$$
The item in the parenthesis in the denominator is
$$
z + s(1 - \gamma + \gamma z m(-z)) = z + s\left(1 - \frac {\gamma} {z} + o\left(\frac{1}{z}\right)\right).
$$
Hence,
$
    \lim_{z \to \infty} V(z) = 0.
$

Therefore, under the null hypothesis of market return unpredictability, at the optimal $z^{*}=\infty,$ the model error tends to zero, that is, 
$ME(z^{*}) \to 0,$ and equivalently the prediction error $PE(z^{*}) \to \sigma^2.$
\end{proof}

\subsection{Assumption, Sketch and Remark of Proof of Theorem~\ref{thm:cv}: CV Uniform Consistency}
\label{AppendixA2}

The following assumptions are used to establish the asymptotic optimal convergence of Theorem~\ref{thm:cv} closely following \cite{patil2021uniform}. That is, GCV-tuned ridge shrinkage parameter yields the same asymptotic prediction error
as the optimally-tuned ridge estimator. 
    \begin{itemize}
		\item[(A1)]
There exists a true coefficient vector $\boldsymbol{\beta}_0 \in \mathbb{R}^P$ such that
$\mathbf{R} = \mathbf{S}\boldsymbol{\beta}_0 + \boldsymbol{\varepsilon}$, where
$\boldsymbol{\varepsilon}$ is independent of $\mathbf{S}$ with i.i.d.\
components satisfying $\mathbb{E}[\varepsilon_t] = 0$,
$\mathbb{E}[\varepsilon_t^2] = \sigma^2 > 0$, and
$\mathbb{E}[|\varepsilon_t|^{4+\delta}] < \infty$ for some $\delta > 0$.

\item[(A2)]
The rows of $\mathbf{S}$ satisfy $S_t = \boldsymbol{\Sigma}^{1/2}\mathbf{X}_t$,
where $\boldsymbol{\Sigma} \in \mathbb{R}^{P\times P}$ is a deterministic
positive definite matrix and $\mathbf{a}_t \in \mathbb{R}^P$ has i.i.d.\
components with mean $0$, variance $1$, and finite $(4+\delta)$th moment.

\item[(A3)]
The eigenvalues of $\boldsymbol{\Sigma}$ satisfy $0 < r_{\min} \leq
\lambda_{\min}(\boldsymbol{\Sigma}) \leq \lambda_{\max}(\boldsymbol{\Sigma})
\leq r_{\max} < \infty$, uniformly in $P$.

\item[(A4)]
$\|\boldsymbol{\beta}_0\|_2^2 \leq C$ for some constant $C < \infty$
independent of $P$.
\end{itemize}

For ridge regression, the Sherman--Morrison--Woodbury identity yields a shortcut formula:
\begin{equation}
\mathrm{CV}(z) \;=\; \frac{1}{T}\sum_{t=0}^{T-1}
\left(\frac{R_{c,t+1} - {\mathbf{S}_{c,t}}\hat{\boldsymbol{\beta}}_z}{1 - \mathbf{H}_{c,ii}(z)}\right)^{\!2},
\label{eq:loocv}
\end{equation}
where $\mathbf{H}_{c,ii}(z)$ is the $i$-th diagonal entry of hat matrix $\mathbf{H_c}(z)$. 

GCV approximates CV by replacing each diagonal element of hat matrix with the average or trace $\mathrm{tr}(\mathbf{H_c}(z))/T$, making it more numerically stable \citep{golub1979generalized}.  
\begin{align}
\mathrm{GCV}(z) \;&=\;
\frac{1}{T} \sum_{t=0}^{T-1}
\left(\frac{R_{c,t+1} - S_{c,t}^\top\prime{\boldsymbol{\beta}}_z}
{1 - \mathrm{tr}(\mathbf{H_c}(z))/T}\right)^{\!2}
\;\notag \\
&=\;
\frac{\dfrac{1}{T}\displaystyle\sum_{t=0}^{T-1}
\!\left(R_{c,t+1} - S_{c,t}^\prime\hat{\boldsymbol{\beta}}_z\right)^{\!2}}
{\left(1 - \mathrm{tr}(\mathbf{H_c}(z))/T\right)^{\!2}},
\label{eq:gcv}
\end{align}
where the numerator is the average squared in-sample residual and
$\mathrm{tr}(\mathbf{H_c}(z))$ is commonly termed as the ``effective degrees of freedom,'' a data-driven measure of model
complexity that increases as $z$ decreases toward zero. The GCV-selected parameter is 
$
\hat{z}_{\mathrm{GCV}} =
\arg\min_{z \in \mathcal{Z}}\,\mathrm{GCV}(z). 
$

Following \citet{patil2021uniform}, we can first establish the GCV-tuned ridge estimator is asymptotically optimal. \\
{\bf{Corollary}}[GCV Uniform Consistency, \citet{patil2021uniform}]
Under Assumptions A1 to A4, let $P, T \to \infty$
with $P/T \to \gamma \in (0,\infty)$. For every compact interval $\mathcal{Z}
\subseteq (z_{\min}, \infty)$,
\begin{equation}
\sup_{z \in \mathcal{Z}}\,\bigl|\mathrm{GCV}(z) - \mathrm{PE}(z)\bigr|
\;\xrightarrow{a.s.}\; 0.
\end{equation}
Consequently, the GCV-tuned estimator achieves the same asymptotic prediction error as the optimally-tuned estimator:
\begin{equation}
\mathrm{PE}(\hat{z}_{\mathrm{GCV}}) - \mathrm{PE}(z^*)
\;\xrightarrow{a.s.}\; 0.
\label{eq:gcv_optimal}
\end{equation}
Again, following \citet{patil2021uniform}, we have
$$CV(z) - GCV(z) \xrightarrow{a.s.} 0$$ 
uniformly for compact intervals $\mathcal{Z}
\subseteq (z_{\min}, \infty)$,
Combining the above asymptotic properties, we obtain the CV uniform consistency of Theorem~\ref{thm:cv}.

\section{Predictor Variables}
\label{AppendixB}

Following KMZ, we employ the fourteen predictor variables from \citet{Welch2008Comprehensive} alongside the lagged excess equity market return. The dividend-price ratio (\texttt{dp}) is the log of the trailing 12-month sum of dividends minus the log of the current index level, while the dividend yield (\texttt{dy}) replaces the current index level with its lagged value. The earnings-price ratio (\texttt{ep}) is the log of the trailing 12-month sum of earnings minus the log of the current index level, and the dividend-payout ratio (\texttt{de}) is the log of trailing dividends minus the log of trailing earnings. These four variables are constructed using data from the S\&P 500 index. The book-to-market ratio (\texttt{b/m}) is the ratio of book value to market value of the equity for the Dow Jones Industrial Average. The Treasury bill rate (\texttt{tbl}) represents the three-month Treasury bill yield from the secondary market, and the long-term yield (\texttt{lty}) is the yield on long-term government bonds, with the term spread (\texttt{tms}) defined as the difference between the two. The default yield spread (\texttt{dfy}) is the difference between BAA- and AAA-rated corporate bond yields, and the default return spread (\texttt{dfr}) is the difference between the returns on long-term corporate and government bonds. We also include the long-term government bond return (\texttt{ltr}). The monthly inflation rate (\texttt{infl}) is computed from the not seasonally adjusted headline Consumer Price Index with a one-month reporting lag. Lastly, stock market variance (\texttt{svar}) is the monthly sum of squared daily returns on the S\&P 500 index, and net equity expansion (\texttt{ntis}) is the ratio of the trailing 12-month sum of net equity issues by NYSE-listed stocks to the total NYSE market capitalization.

\section{Additional Tables and Figures}\label{AppendixC}
This Section provides additional tables and figures for training sample lengths of 60 and 120 months.


\begin{figure}[h]
\centering 
\includegraphics[width=0.8\textwidth]{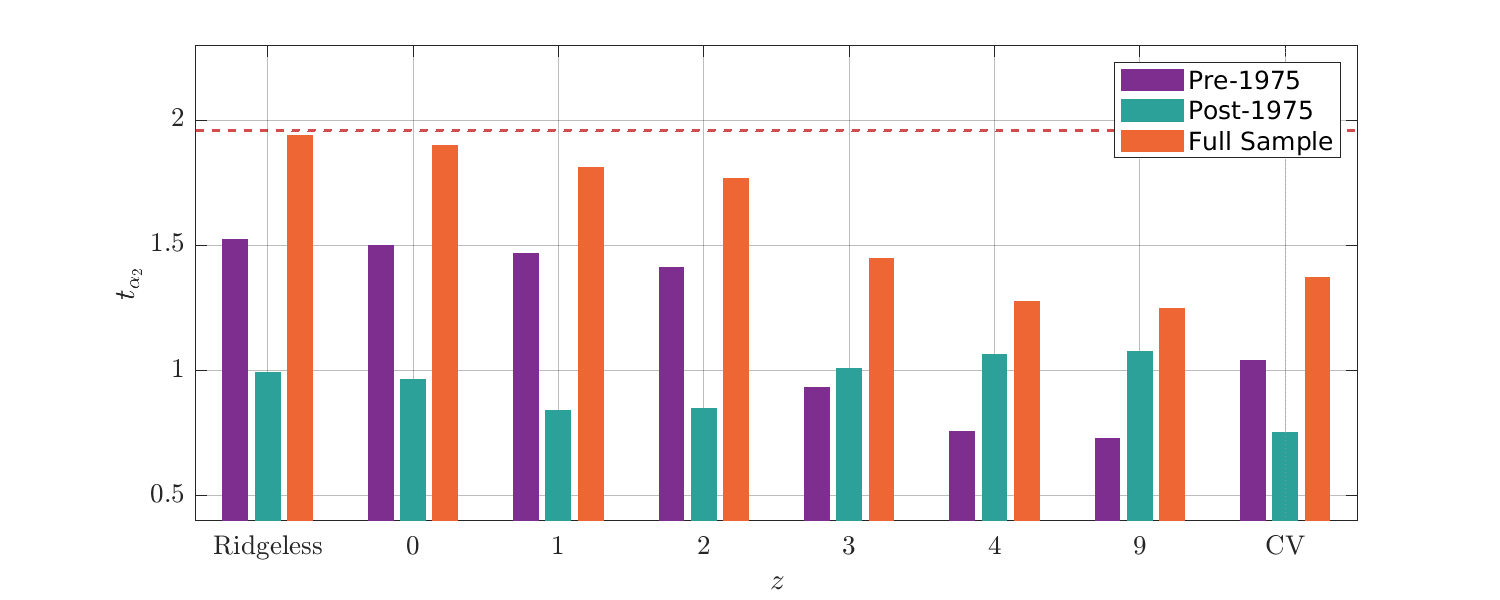}
\caption{Newey--West $t$-Statistic of $\alpha_2$ for the RFF Model across regularization values: 60-Month Training Windows}
\label{fig:alpha2_tstatd_60}
\end{figure}


\begin{figure}[h]
\centering 
\includegraphics[width=0.8\textwidth]{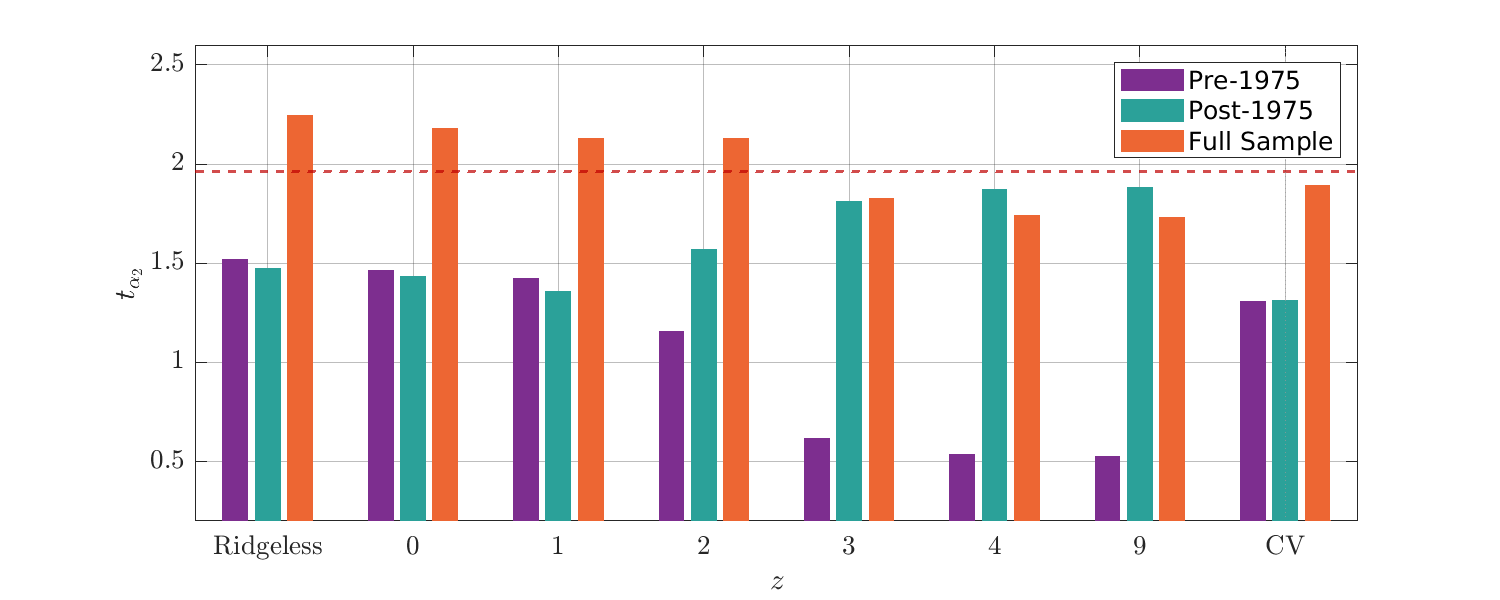}
\caption{Newey--West $t$-Statistic of $\alpha_2$ for the RFF Model across regularization values: 120-Month Training Windows}
\label{fig:alpha2_tstatd_120}
\end{figure}

\begin{table}
\TABLE
{Market-Timing Performance: 60-Month Training Windows\label{tab:60month_s1}}
{%
\begin{tabular}{l L C R R R R R}
\toprule
& Portfolio & $k$ & $\text{OOS } R^2$ & SR & Corr & $t_{\alpha_1}$ & $t_{\alpha_2}$ \\
& & $(z = 10^k)$ & & & & & \\
\midrule
\multicolumn{8}{l}{\textbf{Panel A: 1935--1974}} \\
\midrule
1  & Market  &           &        & 0.52 &      &        &       \\
2  & History &           &        & 0.58 &      &  1.44  &       \\
3  & RFF     & 9         &  0.00  & 0.58 & 1.00 &  1.44  &  0.73 \\
4  & Linear  & 9         &  0.00  & 0.58 & 1.00 &  1.44  &  1.88 \\
5  & RFF     & 3         &  0.00  & 0.60 & 0.98 &  1.63  &  0.93 \\
6  & Linear  & 3         &  0.00  & 0.58 & 1.00 &  1.45  &  1.88 \\
7  & RFF     & Ridgeless & -0.02  & 0.54 & 0.65 &  2.20  &  1.53 \\
8  & RFF     & CV        & -0.00  & 0.56 & 0.76 &  1.85  &  1.04 \\
\midrule
\multicolumn{8}{l}{\textbf{Panel B: 1975--2020}} \\
\midrule
9  & Market  &           &        & 0.56 &      &        &       \\
10 & History &           &        & 0.34 &      & -0.37  &       \\
11 & RFF     & 9         &  0.00  & 0.34 & 1.00 & -0.37  &  1.08 \\
12 & Linear  & 9         &  0.00  & 0.34 & 1.00 & -0.37  &  1.00 \\
13 & RFF     & 3         &  0.00  & 0.36 & 0.99 & -0.19  &  1.01 \\
14 & Linear  & 3         &  0.00  & 0.34 & 1.00 & -0.35  &  1.00 \\
15 & RFF     & Ridgeless & -0.00  & 0.38 & 0.81 &  0.47  &  1.00 \\
16 & RFF     & CV        & -0.00  & 0.37 & 0.93 &  0.08  &  0.75 \\
\midrule
\multicolumn{8}{l}{\textbf{Panel C: 1935--2020}} \\
\midrule
17 & Market  &           &        & 0.54 &      &        &       \\
18 & History &           &        & 0.45 &      &  0.75  &       \\
19 & RFF     & 9         &  0.00  & 0.45 & 1.00 &  0.75  &  1.25 \\
20 & Linear  & 9         &  0.00  & 0.45 & 1.00 &  0.75  &  1.85 \\
21 & RFF     & 3         &  0.00  & 0.47 & 0.99 &  1.05  &  1.45 \\
22 & Linear  & 3         &  0.00  & 0.45 & 1.00 &  0.78  &  1.85 \\
23 & RFF     & Ridgeless & -0.01  & 0.45 & 0.72 &  1.98  &  1.94 \\
24 & RFF     & CV        & -0.00  & 0.46 & 0.84 &  1.51  &  1.37 \\
\bottomrule
\end{tabular}%
}{\textbf{Note:} Market denotes the equity market portfolio. History denotes the historical average benchmark. RFF denotes the nonlinear model with 12{,}000 random Fourier features. Linear denotes the linear model of the 15 original predictor variables. Both features and response are demeaned prior to estimation ($\mathrm{dX}=1$, $\mathrm{dY}=1$). $k$ denotes the regularization exponent ($z = 10^k$); Ridgeless denotes the ridgeless estimator ($z = 0$); CV denotes cross-validation selected $z$. $t$-statistics use Newey--West standard errors with four lags.}
\end{table}

\newpage

\begin{table}
\TABLE
{Market-Timing Performance: 120-Month Training Windows\label{tab:120month_s1}}
{%
\begin{tabular}{l L C R R R R R}
\toprule
& Portfolio & $k$ & $\text{OOS } R^2$ & SR & Corr & $t_{\alpha_1}$ & $t_{\alpha_2}$ \\
& & $(z = 10^k)$ & & & & & \\
\midrule
\multicolumn{8}{l}{\textbf{Panel A: 1940--1974}} \\
\midrule
1  & Market  &           &        & 0.53 &      &       &       \\
2  & History &           &        & 0.70 &      &  2.12 &       \\
3  & RFF     & 9         &  0.00  & 0.70 & 1.00 &  2.12 &  0.53 \\
4  & Linear  & 9         &  0.00  & 0.70 & 1.00 &  2.12 &  1.94 \\
5  & RFF     & 3         &  0.00  & 0.71 & 0.99 &  2.20 &  0.62 \\
6  & Linear  & 3         &  0.00  & 0.70 & 1.00 &  2.14 &  1.95 \\
7  & RFF     & Ridgeless & -0.01  & 0.61 & 0.61 &  2.60 &  1.52 \\
8  & RFF     & CV        &  0.01  & 0.68 & 0.73 &  2.49 &  1.31 \\
\midrule
\multicolumn{8}{l}{\textbf{Panel B: 1975--2020}} \\
\midrule
9  & Market  &           &        & 0.56 &      &       &       \\
10 & History &           &        & 0.41 &      &  0.13 &       \\
11 & RFF     & 9         &  0.00  & 0.41 & 1.00 &  0.13 &  1.88 \\
12 & Linear  & 9         &  0.00  & 0.41 & 1.00 &  0.13 &  0.51 \\
13 & RFF     & 3         &  0.00  & 0.44 & 1.00 &  0.35 &  1.81 \\
14 & Linear  & 3         &  0.00  & 0.41 & 1.00 &  0.13 &  0.52 \\
15 & RFF     & Ridgeless &  0.00  & 0.46 & 0.75 &  1.25 &  1.47 \\
16 & RFF     & CV        &  0.00  & 0.45 & 0.89 &  0.86 &  1.31 \\
\midrule
\multicolumn{8}{l}{\textbf{Panel C: 1940--2020}} \\
\midrule
17 & Market  &           &        & 0.55 &      &       &       \\
18 & History &           &        & 0.54 &      &  1.49 &       \\
19 & RFF     & 9         &  0.00  & 0.54 & 1.00 &  1.49 &  1.73 \\
20 & Linear  & 9         &  0.00  & 0.54 & 1.00 &  1.49 &  1.97 \\
21 & RFF     & 3         &  0.00  & 0.56 & 0.99 &  1.74 &  1.83 \\
22 & Linear  & 3         &  0.00  & 0.55 & 1.00 &  1.50 &  1.98 \\
23 & RFF     & Ridgeless & -0.01  & 0.52 & 0.66 &  2.75 &  2.25 \\
24 & RFF     & CV        &  0.00  & 0.56 & 0.80 &  2.45 &  1.89 \\
\bottomrule
\end{tabular}%
}{\textbf{Note:} Market denotes the equity market portfolio. History denotes the historical average benchmark. RFF denotes the nonlinear model with 12{,}000 random Fourier features. Linear denotes the linear model of the 15 original predictor variables. Both features and response are demeaned prior to estimation ($\mathrm{dX}=1$, $\mathrm{dY}=1$). $k$ denotes the regularization exponent ($z = 10^k$); Ridgeless denotes the ridgeless estimator ($z = 0$); CV denotes cross-validation selected $z$. $t$-statistics use Newey--West standard errors with four lags.}
\end{table}

\newpage

\begin{table}
\TABLE
{Historical Average Benchmark and Momentum Strategies: 60-Month Training Windows\label{tab:mom60month}}
{%
\begin{tabular}{lLRRRR}
\toprule
& Portfolio & SR & Corr & $t_{\alpha_1}$ & $t_{\alpha_2}$ \\
\midrule
\multicolumn{6}{l}{\textbf{Panel A: 1935--1974}} \\
\midrule
1  & History &  0.58 &      &  1.48 &        \\
2  & TsMom   &  0.57 & 1.00 &  1.38 & -1.44  \\
3  & VolMom  &  0.50 & 0.89 &  1.21 & -0.22  \\
4  & DW      &  0.55 & 0.94 &  1.54 &  0.25  \\
5  & PV      &  0.49 & 0.77 &  0.37 & -0.25  \\
6  & EWPV    &  0.56 & 0.93 &  1.49 &  0.22  \\
\midrule
\multicolumn{6}{l}{\textbf{Panel B: 1975--2020}} \\
\midrule
7  & History &  0.33 &      & -0.46 &        \\
8  & TsMom   &  0.34 & 1.00 & -0.38 &  0.58  \\
9  & VolMom  &  0.34 & 0.92 & -0.03 &  0.66  \\
10 & DW      &  0.36 & 0.94 &  0.03 &  0.84  \\
11 & PV      &  0.44 & 0.84 & -0.88 & -0.75  \\
12 & EWPV    &  0.31 & 0.95 & -0.43 & -0.03  \\
\midrule
\multicolumn{6}{l}{\textbf{Panel C: 1935--2020}} \\
\midrule
13 & History &  0.45 &      &  0.79 &        \\
14 & TsMom   &  0.45 & 1.00 &  0.78 & -0.04  \\
15 & VolMom  &  0.42 & 0.91 &  0.87 &  0.33  \\
16 & DW      &  0.45 & 0.94 &  1.11 &  0.89  \\
17 & PV      &  0.47 & 0.79 & -0.13 & -0.58  \\
18 & EWPV    &  0.43 & 0.94 &  0.83 &  0.20  \\
\bottomrule
\end{tabular}%
}{\textbf{Note:} History denotes the historical average benchmark. TsMom denotes time-series momentum. VolMom denotes volatility-scaled momentum. DW denotes the Dynamically Weighted strategy. PV and EWPV denote the Parametric and Equal-Weighted Parametric Volatility strategies. Corr denotes the correlation of each portfolio return with the historical average. $t_{\alpha_1}$ and $t_{\alpha_2}$ denote $t$-statistics for the intercept from regressions on the market and on the market plus historical average, respectively, using Newey--West standard errors with four lags.}
\end{table}

\newpage

\begin{table}
\TABLE
{Historical Average Benchmark and Momentum Strategies: 120-Month Training Windows\label{tab:mom120month}}
{%
\begin{tabular}{lLRRRR}
\toprule
& Portfolio & SR & Corr & $t_{\alpha_1}$ & $t_{\alpha_2}$ \\
\midrule
\multicolumn{6}{l}{\textbf{Panel A: 1940--1974}} \\
\midrule
1  & History &  0.70 &      &  2.15 &        \\
2  & TsMom   &  0.69 & 1.00 &  2.08 & -1.25  \\
3  & VolMom  &  0.60 & 0.95 &  1.37 & -1.28  \\
4  & DW      &  0.66 & 0.97 &  1.83 & -0.50  \\
5  & PV      &  0.45 & 0.83 & -0.60 & -1.17  \\
6  & EWPV    &  0.62 & 0.97 &  1.50 & -1.10  \\
\midrule
\multicolumn{6}{l}{\textbf{Panel B: 1975--2020}} \\
\midrule
7  & History &  0.41 &      &  0.04 &        \\
8  & TsMom   &  0.40 & 1.00 & -0.05 & -1.53  \\
9  & VolMom  &  0.29 & 0.92 & -0.56 & -1.28  \\
10 & DW      &  0.38 & 0.96 & -0.15 & -0.45  \\
11 & PV      &  0.46 & 0.80 & -1.15 & -1.40  \\
12 & EWPV    &  0.30 & 0.94 & -0.52 & -1.29  \\
\midrule
\multicolumn{6}{l}{\textbf{Panel C: 1940--2020}} \\
\midrule
13 & History &  0.54 &      &  1.50 &        \\
14 & TsMom   &  0.54 & 1.00 &  1.43 & -0.80  \\
15 & VolMom  &  0.44 & 0.94 &  0.64 & -1.48  \\
16 & DW      &  0.51 & 0.96 &  1.22 & -0.40  \\
17 & PV      &  0.45 & 0.82 & -1.06 & -1.83  \\
18 & EWPV    &  0.46 & 0.95 &  0.69 & -1.51  \\
\bottomrule
\end{tabular}%
}{\textbf{Note:} History denotes the historical average benchmark. TsMom denotes time-series momentum. VolMom denotes volatility-scaled momentum. DW denotes the Dynamically Weighted strategy. PV and EWPV denote the Parametric and Equal-Weighted Parametric Volatility strategies. Corr denotes the correlation of each portfolio return with the historical average. $t_{\alpha_1}$ and $t_{\alpha_2}$ denote $t$-statistics for the intercept from regressions on the market and on the market plus historical average, respectively, using Newey--West standard errors with four lags.}
\end{table}

\newpage
\clearpage

\begin{table}[p]
\TABLE
{Bootstrap Null Distribution of Out-of-Sample Performance: 60-Month Window\label{tab:bootstrap60}}
{%
\begin{tabular}{l R R R R}
\toprule
& SR & OOS $R^2$ & $t_{\alpha_1}$ & $t_{\alpha_2}$ \\
\midrule
Original Data     & 0.46 & $-$0.00 & 1.54 & 1.41 \\
Percentile        & 19.00 &  16.00 & 87.00 & 0.00 \\
Mean              & 0.47 &    0.00 & 1.44 & 2.13 \\
Median            & 0.47 & $-$0.00 & 1.44 & 2.15 \\
Std               & 0.01 &    0.00 & 0.09 & 0.26 \\
5th Pct           & 0.45 & $-$0.00 & 1.29 & 1.70 \\
95th Pct          & 0.48 &    0.00 & 1.61 & 2.60 \\
\midrule
Pooled Percentile & 39.7 &  37.0 & 29.00 & 23.67 \\
\bottomrule
\end{tabular}%
}{\textbf{Note:} Original Data denotes the out-of-sample performance of the model estimated on the original data. Percentile denotes the percentile of the original data performance within the window-specific bootstrap null distribution. Pooled Percentile reports, for $t_{\alpha_1}$ and $t_{\alpha_2}$, the percentile of the original data statistic within the bootstrap null distribution pooled across all training windows. Mean, Median, Std, 5th Pct, and 95th Pct summarise the distribution of out-of-sample performance across $B=100$ bootstrap replications, each using $100$ independently drawn sets of random Fourier features. SR denotes the annualized Sharpe ratio of the market timing strategy. OOS $R^2$ denotes the out-of-sample $R^2$ of the return forecast. $t_{\alpha_1}$ and $t_{\alpha_2}$ denote the Newey-West $t$-statistics from regressing the strategy return on the market ($\alpha_1$) and on the market plus the historical average ($\alpha_2$).}
\end{table}

\newpage
\clearpage
\begin{table}[p]
\TABLE
{Bootstrap Null Distribution of Out-of-Sample Performance: 120-Month Window\label{tab:bootstrap120}}
{%
\begin{tabular}{l R R R R}
\toprule
& SR & OOS $R^2$ & $t_{\alpha_1}$ & $t_{\alpha_2}$ \\
\midrule
Original Data     & 0.56 &    0.00 & 2.46 & 1.90 \\
Percentile        & 40.00 & 100.00 & 100.00 & 74.00 \\
Mean              & 0.56 & $-$0.00 & 2.04 & 1.72 \\
Median            & 0.56 & $-$0.00 & 2.03 & 1.67 \\
Std               & 0.01 &    0.00 & 0.11 & 0.27 \\
5th Pct           & 0.55 & $-$0.00 & 1.89 & 1.35 \\
95th Pct          & 0.58 &    0.00 & 2.21 & 2.19 \\
\midrule
Pooled Percentile & 80.00 &  100.00 & 66.67 & 62.33 \\
\bottomrule
\end{tabular}%
}{\textbf{Note:} Original Data denotes the out-of-sample performance of the model estimated on the original data. Percentile denotes the percentile of the original data performance within the window-specific bootstrap null distribution. Pooled Percentile reports, for $t_{\alpha_1}$ and $t_{\alpha_2}$, the percentile of the original data statistic within the bootstrap null distribution pooled across all training windows. Mean, Median, Std, 5th Pct, and 95th Pct summarise the distribution of out-of-sample performance across $B=100$ bootstrap replications, each using $100$ independently drawn sets of random Fourier features. SR denotes the annualized Sharpe ratio of the market timing strategy. OOS $R^2$ denotes the out-of-sample $R^2$ of the return forecast. $t_{\alpha_1}$ and $t_{\alpha_2}$ denote the Newey-West $t$-statistics from regressing the strategy return on the market ($\alpha_1$) and on the market plus the historical average ($\alpha_2$).}
\end{table}

\newpage

\begin{table}
\TABLE
{Market-Timing Performance: 60-Month Training Windows with KMZ No Intercept with Non-centered Market Return\label{tab:60month_center}}
{\begin{tabular}{l L C R R R R R R R R}
\toprule
\multicolumn{2}{c}{} & k & \multicolumn{4}{c}{\textbf{Non-centered Predictors (KMZ)}} & \multicolumn{4}{c}{\textbf{Centered Predictors}} \\
\cmidrule(lr){4-7} \cmidrule(lr){8-11}
& Portfolio & $(z = 10^k)$ & SR & Corr & $t_{\alpha_1}$ & $t_{\alpha_2}$
                  & SR & Corr & $t_{\alpha_1}$ & $t_{\alpha_2}$ \\
\midrule
\multicolumn{11}{l}{\textbf{Panel A: 1935--1974}} \\
\midrule
1  & Market  &           &  0.52 &       &        &        &  0.52 &        &        &       \\
2  & History &           &  0.58 &       &  1.44  &        &  0.58 &        &  1.44  &       \\
3  & RFF     & 9         &  0.36 &  0.40 &  1.04  &  0.70  &  0.09 & -0.04  &  0.83  &  0.71 \\
4  & Linear  & 9         &  0.39 &  0.82 &  0.30  & -0.92  &  0.25 & -0.03  &  2.02  &  1.86 \\
5  & RFF     & 3         &  0.43 &  0.41 &  1.59  &  1.21  &  0.13 & -0.01  &  1.05  &  0.93 \\
6  & Linear  & 3         &  0.58 &  0.97 &  1.44  &  0.27  &  0.25 & -0.03  &  2.02  &  1.86 \\
7  & RFF     & Ridgeless &  0.41 &  0.27 &  2.32  &  1.97  &  0.24 &  0.05  &  1.70  &  1.52 \\
8  & RFF     & CV        &  0.34 &  0.15 &  1.56  &  1.39  &  0.19 &  0.00  &  1.23  &  1.11 \\
\midrule
\multicolumn{11}{l}{\textbf{Panel B: 1975--2020}} \\
\midrule
9  & Market  &           &  0.56 &       &        &        &  0.56 &        &        &       \\
10 & History &           &  0.34 &       & -0.37  &        &  0.34 &        & -0.37  &       \\
11 & RFF     & 9         &  0.46 &  0.47 &  1.98  &  2.15  &  0.14 & -0.13  &  1.08  &  1.04 \\
12 & Linear  & 9         &  0.38 &  0.82 &  0.60  &  1.16  &  0.21 & -0.06  &  1.01  &  0.99 \\
13 & RFF     & 3         &  0.45 &  0.44 &  1.90  &  2.04  &  0.14 & -0.12  &  1.03  &  0.99 \\
14 & Linear  & 3         &  0.37 &  0.98 & -0.02  &  1.16  &  0.21 & -0.06  &  1.01  &  0.99 \\
15 & RFF     & Ridgeless &  0.36 &  0.34 &  1.56  &  1.63  &  0.17 & -0.00  &  1.01  &  0.99 \\
16 & RFF     & CV        &  0.28 &  0.29 &  1.24  &  1.28  &  0.14 &  0.05  &  0.94  &  0.93 \\
\midrule
\multicolumn{11}{l}{\textbf{Panel C: 1935--2020}} \\
\midrule
17 & Market  &           &  0.54 &       &        &        &  0.54 &        &        &       \\
18 & History &           &  0.45 &       &  0.75  &        &  0.45 &        &  0.75  &       \\
19 & RFF     & 9         &  0.37 &  0.39 &  1.70  &  1.54  &  0.10 & -0.07  &  1.21  &  1.22 \\
20 & Linear  & 9         &  0.33 &  0.72 &  0.61  &  0.17  &  0.23 & -0.05  &  1.81  &  1.84 \\
21 & RFF     & 3         &  0.42 &  0.40 &  2.30  &  2.16  &  0.13 & -0.05  &  1.42  &  1.43 \\
22 & Linear  & 3         &  0.44 &  0.97 &  0.85  &  0.32  &  0.23 & -0.05  &  1.81  &  1.84 \\
23 & RFF     & Ridgeless &  0.38 &  0.29 &  2.76  &  2.64  &  0.20 &  0.03  &  1.96  &  1.94 \\
24 & RFF     & CV        &  0.31 &  0.21 &  1.96  &  1.86  &  0.16 &  0.02  &  1.52  &  1.48 \\
\bottomrule
\end{tabular}}{\textbf{Note:} Market denotes the equity market portfolio. History denotes the historical average benchmark. RFF denotes the nonlinear model with 12{,}000 random Fourier features. Linear denotes the linear model of the 15 original predictor variables. Non-centered Predictors (KMZ): neither features nor response are demeaned ($\mathrm{dX}=0$, $\mathrm{dY}=0$). Centered Predictors: features are demeaned but response is not ($\mathrm{dX}=1$, $\mathrm{dY}=0$). $k$ denotes the regularization exponent ($z = 10^k$); Ridgeless denotes the ridgeless estimator ($z = 0$); CV denotes cross-validation selected $z$. $t$-statistics use Newey--West standard errors with four lags.}
\end{table}
 
\newpage
 
\begin{table}
\TABLE
{Market-Timing Performance: 120-Month Training Windows with KMZ No Intercept with Non-centered Market Return\label{tab:120month_center}}
{\begin{tabular}{l L C R R R R R R R R}
\toprule
\multicolumn{2}{c}{} & k & \multicolumn{4}{c}{\textbf{Non-centered Predictors (KMZ)}} & \multicolumn{4}{c}{\textbf{Centered Predictors}} \\
\cmidrule(lr){4-7} \cmidrule(lr){8-11}
& Portfolio & $(z = 10^k)$ & SR & Corr & $t_{\alpha_1}$ & $t_{\alpha_2}$
                  & SR & Corr & $t_{\alpha_1}$ & $t_{\alpha_2}$ \\
\midrule
\multicolumn{11}{l}{\textbf{Panel A: 1940--1974}} \\
\midrule
1  & Market  &           &  0.53 &       &        &        &  0.53 &        &        &       \\
2  & History &           &  0.70 &       &  2.12  &        &  0.70 &        &  2.12  &       \\
3  & RFF     & 9         &  0.38 &  0.40 &  1.06  &  0.63  &  0.08 & -0.05  &  1.09  &  0.51 \\
4  & Linear  & 9         &  0.53 &  0.86 &  0.74  & -0.56  &  0.21 & -0.19  &  2.56  &  1.93 \\
5  & RFF     & 3         &  0.42 &  0.41 &  1.37  &  0.87  &  0.10 & -0.03  &  1.14  &  0.61 \\
6  & Linear  & 3         &  0.66 &  0.97 &  1.78  & -0.27  &  0.21 & -0.19  &  2.56  &  1.93 \\
7  & RFF     & Ridgeless &  0.43 &  0.25 &  2.46  &  1.93  &  0.25 &  0.03  &  1.84  &  1.52 \\
8  & RFF     & CV        &  0.36 &  0.15 &  1.74  &  1.42  &  0.23 & -0.01  &  1.66  &  1.37 \\
\midrule
\multicolumn{11}{l}{\textbf{Panel B: 1975--2020}} \\
\midrule
9  & Market  &           &  0.56 &       &        &        &  0.56 &        &        &       \\
10 & History &           &  0.41 &       &  0.13  &        &  0.41 &        &  0.13  &       \\
11 & RFF     & 9         &  0.46 &  0.37 &  2.01  &  2.02  &  0.28 & -0.03  &  1.84  &  1.85 \\
12 & Linear  & 9         &  0.25 &  0.74 & -0.04  & -0.16  &  0.13 & -0.04  &  0.49  &  0.50 \\
13 & RFF     & 3         &  0.45 &  0.36 &  1.96  &  1.97  &  0.28 & -0.02  &  1.79  &  1.80 \\
14 & Linear  & 3         &  0.38 &  0.96 &  0.18  &  0.20  &  0.13 & -0.04  &  0.50  &  0.51 \\
15 & RFF     & Ridgeless &  0.36 &  0.28 &  1.58  &  1.57  &  0.25 &  0.04  &  1.47  &  1.47 \\
16 & RFF     & CV        &  0.27 &  0.23 &  1.23  &  1.22  &  0.18 &  0.07  &  1.18  &  1.18 \\
\midrule
\multicolumn{11}{l}{\textbf{Panel C: 1940--2020}} \\
\midrule
17 & Market  &           &  0.55 &       &        &        &  0.55 &        &        &       \\
18 & History &           &  0.54 &       &  1.49  &        &  0.54 &        &  1.49  &       \\
19 & RFF     & 9         &  0.37 &  0.37 &  1.75  &  1.43  &  0.15 & -0.04  &  1.80  &  1.70 \\
20 & Linear  & 9         &  0.29 &  0.67 &  0.10  & -0.68  &  0.16 & -0.12  &  1.84  &  1.95 \\
21 & RFF     & 3         &  0.40 &  0.38 &  2.12  &  1.80  &  0.17 & -0.03  &  1.89  &  1.82 \\
22 & Linear  & 3         &  0.49 &  0.95 &  1.03  & -0.64  &  0.16 & -0.12  &  1.85  &  1.96 \\
23 & RFF     & Ridgeless &  0.39 &  0.26 &  2.89  &  2.63  &  0.24 &  0.04  &  2.33  &  2.24 \\
24 & RFF     & CV        &  0.31 &  0.18 &  2.11  &  1.94  &  0.20 &  0.02  &  2.01  &  1.93 \\
\bottomrule
\end{tabular}}{\textbf{Note:} Market denotes the equity market portfolio. History denotes the historical average benchmark. RFF denotes the nonlinear model with 12{,}000 random Fourier features. Linear denotes the linear model of the 15 original predictor variables. Non-centered Predictors (KMZ): neither features nor response are demeaned ($\mathrm{dX}=0$, $\mathrm{dY}=0$). Centered Predictors: features are demeaned but response is not ($\mathrm{dX}=1$, $\mathrm{dY}=0$). $k$ denotes the regularization exponent ($z = 10^k$); Ridgeless denotes the ridgeless estimator ($z = 0$); CV denotes cross-validation selected $z$. $t$-statistics use Newey--West standard errors with four lags.}
\end{table}

\end{document}